\documentclass[final,12pt,cleveref]{colt2026_preprint} 
\PassOptionsToPackage{capitalize}{cleveref}

\usepackage[textsize=tiny
]{todonotes}
\title[Algorithmic Stability of Minimum-Norm Interpolating Deep Neural Networks]{Sufficient Conditions for Stability of Minimum-Norm Interpolating Deep ReLU Networks}
\usepackage{times}
\usepackage{cleveref}
\newtheorem{hypothesis}[theorem]{Hypothesis}
\newtheorem{assumption}[theorem]{Assumption}
\newtheorem{mainresult}[theorem]{Main result}

\coltauthor{%
 \Name{Ouns El Harzli} \Email{ouns.el.harzli@cs.ox.ac.uk}\\
 \addr University of Oxford
 \AND
 \Name{Yoonsoo Nam} \Email{yoonsoo.nam@physics.ox.ac.uk}\\
 \addr University of Oxford
 \AND
 \Name{Ilja Kuzborskij} \Email{iljak@google.com}\\
 \addr Google DeepMind
 \AND
 \Name{Bernardo Cuenca Grau} \Email{bernardo.cuenca.grau@cs.ox.ac.uk}\\
 \addr University of Oxford
 \AND
 \Name{Ard A. Louis} \Email{ard.louis@physics.ox.ac.uk}\\
 \addr University of Oxford
}

\newcommand{\bX}{\mathbf{X}}
\newcommand{\bx}{\mathbf{x}}
\newcommand{\by}{\mathbf{y}}
\newcommand{\bz}{\mathbf{z}}
\newcommand{\bu}{\mathbf{u}}
\newcommand{\bv}{\mathbf{v}}

\newcommand{\bW}{\mathbf{W}}
\newcommand{\bA}{\mathbf{A}}

\newcommand{\cT}{\mathcal{T}}

\newcommand{\cA}{\mathcal{A}}

\renewcommand{\th}{\theta}

\newcommand{\R}{\mathbb{R}}
\newcommand{\tp}{^\top}

\newcommand{\cX}{\mathcal{X}}

\newcommand{\ip}[1]{\langle #1 \rangle}
\newcommand{\pr}[1]{\left( #1 \right)}

\newcommand{\cbr}[1]{\left\{ #1 \right\}}
\newcommand{\abs}[1]{\left| #1 \right|}

\renewcommand{\P}{\mathbb{P}}

\newcommand*\diff{\mathop{}\!\mathrm{d}}
\newcommand{\Var}{\mathrm{Var}}

\DeclareMathOperator*{\argmin}{arg\,min}

\newcommand{\relu}{\mathrm{ReLU}}
\newcommand{\QED}{\hfill\ensuremath{\square}}
\newcommand{\cy}{1}
\newcommand{\cymult}{}
\newcommand{\net}{N}
\newcommand{\fsub}{f}

\begin{document}

\maketitle


\begin{abstract}
Algorithmic stability is a classical framework for analyzing the generalization error of learning algorithms.  
 It predicts that an algorithm has small generalization error if it is insensitive to small perturbations in the training set such as the removal or replacement of a training point.
  While stability has been demonstrated for numerous well-known algorithms, this framework has had limited success in analyses of deep neural networks.
  In this paper we study the algorithmic stability of deep ReLU homogeneous neural networks that achieve zero training error using parameters with the smallest $L_2$ norm, also known as the minimum-norm interpolation, a phenomenon that can be observed in overparameterized models trained by gradient-based algorithms. We investigate sufficient conditions for such networks to be stable.
  We find that 1) such networks are stable when they contain a (possibly small) stable sub-network, followed by a layer with a low-rank weight matrix, and 2) such networks are not guaranteed to be stable even when they contain a stable sub-network, if the following layer is not low-rank. 
  The low-rank assumption is inspired by recent empirical and theoretical results which demonstrate that training deep neural networks is biased towards low-rank weight matrices, for minimum-norm interpolation and weight-decay regularization.   
\end{abstract}

\begin{keywords}
    algorithmic stability, minimum-norm interpolation, low-rank bias, deep ReLU homogeneous network
\end{keywords}

\section{Introduction}

%
%
\paragraph{Background and motivation} 

The generalization ability of gradient-based algorithms is reasonably well-understood when learning involves solving convex or quasi-convex optimization problems, however the picture is much less clear for non-convex learning  problems involving overparameterized models, such as that of training a deep neural network.

In this paper, we explore the generalization capabilities of deep neural networks from the viewpoint of \emph{algorithmic stability}. Stable learning algorithms are insensitive to small perturbations (e.g. removal or replacement of data points) of the training set and  generalize well under mild assumptions ~\citep{bousquet2002stability,shalev2010learnability}.
%
%

%
Many algorithms have been  shown to be stable, including nonparametric predictors (e.g.\, nearest-neighbors)~\citep{devroye1979distribution}, minimizers of strongly convex problems (such as the ridge regression estimator)~\citep{bousquet2002stability}, GD-type algorithms minimizing convex and smooth objectives~\citep{hardt2016train,lei2020fine}, as well as quasi-convex objectives~\citep{charles2018stability,richards2021stability}.

Despite these advances, success has so far been limited in the context
of deep neural networks.
Although several works have analyzed the stability of stochastic gradient descent (SGD) in the
non-convex setting, their results come with significant limitations.
First, vacuous bounds are typically obtained when the number of
training steps (or time) far exceeds the sample size, i.e.\ $t \gg n$
\citep{hardt2016train,kuzborskij2017data,richards2021weakly,wang2023generalization};
at the same time, for a large enough model capacity, we expect
interpolation to happen when $t \gg n$.
Second, strong assumptions such as Lipschitzness of the loss function
\emph{in the parameters} or penalization of the
objective~\citep{hardt2016train,kuzborskij2017data,farghly2021time}
are often required.  Third, these works assume \emph{parameter
  stability}, meaning that parameters are expected to remain close
(typically in the Euclidean distance) if the training set is perturbed
slightly; this is often unrealistic in the context of neural networks
since the same predictor can be expressed using very different
parameters thanks to symmetries in the weight matrices and because of
non-convexity of the objective function.
Finally, these works focus on optimization aspects, which seldom
reveal insight into the structural properties of neural networks
obtained by stable algorithms. 

In this paper, we hypothesize that stability of deep nonlinear networks originates in the early layers, and is preserved throughout the subsequent layers. Specifically, we study the algorithmic stability of minimum-norm interpolation with deep ReLU homogeneous neural networks, and investigate sufficient conditions for stability in relation to the low-rank bias. Our analysis is motivated by recent theoretical and empirical findings suggesting that deep interpolating neural networks exhibit a low-rank weight matrix structure~\citep{frei2022implicit,timor2023implicit,galanti2023characterizing}.
Thus, the idea behind the proofs is to ensure that if a stable sub-network exists, a low-rank weight matrix will preserve its stability (\Cref{stable sub network with stable rank}), and by contrast, that non low-rank matrices could break its stability (\Cref{unstable network high rank}). 

For the data-generating process, we assume the existence of a neural
network with the same architecture that can interpolate the data using
bounded weights, although not necessarily with the minimal norm. This
implies a setting where data cannot be arbitrarily complex
ensuring that weights of such a network remain bounded as the training
set grows.

In this work, we also depart from optimisation-based analyses, by assuming that minimum-norm interpolation is accessible through an oracle. Indeed, it has been shown theoretically that the gradient flow (GF) algorithm (gradient descent can be seen as a discretization of GF) asymptotically fits an interpolating ReLU neural network with the minimum $L_2$ norm of the parameters \citep{lyu2019gradient,ji2020directional,phuong2020inductive}.  With this idealised viewpoint, already taken in \citet{timor2023implicit}, we aim to uncover structures of learned networks that favor (or not) stability. 

\paragraph{Technical contributions}

Studying the stability of minimum-norm interpolating deep neural networks comes with several technical challenges. Since the objective is non-convex, minimum-norm solutions are difficult to characterize. For example, first-order optimality conditions (KKT points) do not distinguish between local minima, global minima, and saddle points. Consequently, even if we assume access to an oracle that returns a global minimum-norm interpolating solution, this alone does not provide meaningful geometric properties in parameter space beyond the fact that its norm is smaller than that of any other interpolant. This issue is all the more problematic for stability analysis because, by definition, assessing stability consists, in our setting, in calling the oracle a second time after changing one data point. Since we don't assume parameter stability, two minimum-norm solutions have \emph{a priori} no reason to share any structural property in parameter space.

%

%
In this work, we establish that minimum-norm interpolations of deep ReLU homogeneous neural networks are stable when they contain a contiguous, \emph{stable sub-network} (for instance, the few first layers), and  this sub-network is followed by a \emph{low-rank weight matrix} (\Cref{stable sub network with stable rank}). At the same time, existence of a stable sub-network is not sufficient to guarantee stability of the deep network: we find that if the stable sub-network is not followed by a low-rank layer, subsequent layers can break stability (\Cref{unstable network high rank}), suggesting a tight interplay between stability and low-rank bias. 

The relationship between the stability of a sub-network and the stability of the full network is untrivial precisely because of the reasons exposed above: parameters $\hat \th$ of a minimum-norm solution and parameters $\hat \th^{(i)}$ of a second minimum-norm solution (after resampling one data point) could be, in principle, vastly different. Even if we assume that the sub-network $f_k (\hat \th, \cdot)$ up to layer $k$ is stable, the deeper layers $\hat \bW_{k+1}, ..., \hat \bW_{L}$ and $\hat \bW_{k+1}^{(i)}, ..., \hat \bW_{L}^{(i)}$ differ and there is no \emph{a priori} reason to believe that they would preserve any stability that originates in early layers. This issue is resolved with a low-rank layer thanks to several key arguments: 
\begin{itemize}
    \item a low-rank layer $\hat \bW_{k+1} = \lambda_k \bu_k \bv_k^T + \bW_k^\epsilon$ projects learned representations onto the dominating singular direction $\bv_k$. Specifically, the sign of the product between $\bv_k^T f_k (\hat \th, \cdot)$ and $\net (\hat \th, \cdot)$ is independent of the input, modulo the perturbation $\bW_k^\epsilon$;
    \item the pertubation $\bW_k^\epsilon$ associated with directions with small singular values can be controlled as it propagates through deeper layers; and  
    \item minimum-norm interpolation forces a margin at the sub-network level $|\bv_k^T f_k (\hat \th, \bx)| \geq \alpha$. Intuitively, the learned representation cannot be too close to the decision boundary because it would otherwise require large coefficients, and hence a large norm, in subsequent layers to satisfy the interpolation.
\end{itemize}

These arguments combined yield a tightly-constrained configuration where deep layers $\hat \bW_{k+1}, ..., \hat \bW_{L}$ and $\hat \bW_{k+1}^{(i)}, ..., \hat \bW_{L}^{(i)}$ must implement the same mapping, in function space, from sub-network representation to network output. Our proof technique has the advantage that it never reasons in parameter space, but rather treats representations (projections onto singular directions, margins) that can be achieved with many parameter configurations. 

The proof of \Cref{stable sub network with stable rank} (a stable sub-network followed by a low-rank layer implies stability of the full network) already reveals that the low-rank bias is crucial to avoid destruction of stability in the deeper layers: indeed, otherwise the perturbation $\bW_k^\epsilon$ cannot be controlled. To make this intuition concrete, we designed a counterexample where a sub-network is stable, and yet the full network is unstable when it is not followed by a low-rank layer (\Cref{unstable network high rank}). However, finding \emph{any} unstable network which admits a stable sub-network is not enough to provide a meaningful counterexample. A core challenge with the endeavour is to establish that the crafted candidate solutions are indeed minimum-norm solutions, which is difficult in general, as mentioned above, since global minima in non-convex settings do not have any trivial distinguishing property. We thus engineered a toy setting where the norm can be bounded from below, and where the solution attains the bound. This was made possible by exploiting a key property of norm minimization, namely the equi-distribution of norm across layers. We think this proof technique is a new tool to explore configurations that lead to stable or unstable minimum-norm interpolants and gain insights into the mechanisms that favor stability.

%



In our analysis we also take a step towards addressing some limitations in the literature discussed earlier.
Algorithmically, interpolating neural networks studied here can be obtained as solutions of a GF and so, unlike previous results, our findings hold for $t \gg n$ and potentially $t \to \infty$.
In contrast to existing works, the analysis does not require regularity assumptions such as Lipschitzness or smoothness in the parameters, and crucially, parameter stability. 



%
%
%

\paragraph{Paper organization} Technical preliminaries and background are provided in \Cref{sec:preliminaries}. The main results are presented in \Cref{sec:main}, with the proofs detailed in \Cref{stable rank}. Additional related works, omitted proofs, and some experiments supporting our conjecture (that a trained deep neural network often consists of a stable sub-network and several final low-rank layers) are provided in the appendix.

%
%

\section{Preliminaries}
\label{sec:preliminaries}

\paragraph{Neural networks.}  We consider fully-connected neural networks with ReLU activations, $L$ layers and uniform width $d$ for hidden layers, no bias, and a real-valued output.
In this setting, network $\net: \mathbb{R}^{d_0} \rightarrow \mathbb{R}$ is defined by weight matrices $\{\mathbf{W}_{\ell}\}_{1 \leq \ell \leq L}$, one matrix per layer, with $\mathbf{W}_{\ell} \in \mathbb{R}^{d_\ell \times d_{\ell -1}}$,  $d_{\ell} = d$ for each $1 \leq \ell \leq L - 1$, and $d_L = 1$.
Network $\net$ computes its output on an input $\mathbf{x} \in \mathbb{R}^{d_0}$ by computing a \emph{pre-activation} vector $\mathbf{h}_\ell$ and a \emph{post-activation} vector $\mathbf{y}_\ell$ for each layer $\ell$ starting from $\mathbf{y}_0 = \mathbf{x}$ as follows, where the activation function $\phi_\ell$ is $\relu(x) = \max\{x, 0\}$ applied to vectors element-wise  for every layer $1 \leq \ell \leq L-1$ and $\phi_L$ is the identity function for layer $L$:
\begin{equation}\label{random nn}
    \mathbf{h}_{\ell}  =   \mathbf{W}_{\ell} \cdot \mathbf{y}_{\ell-1} \qquad \mathbf{y}_{\ell} = \phi_\ell (\mathbf{h}_{\ell}).
  \end{equation}
  We do not explicitly consider the bias term, however it can be modelled by appending $1$ to $\by_{\ell-1}$.  
  
  The network's output $\net(\mathbf{x})$ is given by $y_L \in \mathbb{R}$ and the weights $\mathbf{W}_L$ are referred to as the \emph{readout weights}.
  The collection of all parameters is denoted as  $\th$, 
  and we note $\net(\cdot) = \net(\, \cdot \, ; \, \th)$ to emphasize the dependency in the parameters.
Network $\net$ is an instance of a neural architecture $\mathbb{A} = \langle L, d, d_0 \rangle$ determined by the number of layers $L$, the width $d$ of the hidden layers and the input dimension $d_0$.

In the following we use notation $\net^{1:k-1} : \R^{d_0} \to \R^{d_{k-1}}$ to denote a neural network with parameters $\{\mathbf{W}_{\ell}\}_{1 \leq \ell \leq k-1}$  obtained from $\net$ by removing all layers with $\ell > k-1$.

A useful property of $\relu$  is \emph{positive homogeneity}: $\relu(\alpha x) = \alpha \, \relu(x)$ for each $x \in \mathbb{R}$ and $\alpha \geq 0$.
  In particular, we have $\net(\alpha \bx; \, \th) = \alpha \, \net(\bx; \, \th)$ and $\net(\bx; \, \alpha \th) = \alpha^L \, \net(\bx; \, \th)$.
\paragraph{Stable rank.} The Frobenius norm of a matrix $\mathbf{A} \in \mathbb{R}^{p \times q}$ with singular values $s_j$ for $1 \leq j \leq \mathrm{min}(p,q)$ is given by $\| \mathbf{A} \|_F = (\sum_{j} s_j^2)^{\frac12}$. The spectral norm of $\mathbf{A}$, that is $\|\bA\|_2$ is given by the largest absolute value of its singular values.
The stable rank  of matrix $\mathbf{A}$ is defined as $\mathrm{S}(\bA) = \|\bA\|_F / \|\bA\|_2$.
In particular, matrix $\mathbf{A}$ has stable rank $1$ if and only if its rank (defined in the usual way) is also one \citep{tropp2015introduction}; in this case, its singular value with largest absolute value is also the only non-zero singular value.
We will sometimes refer to the Frobenius norm of a neural network $\net$ as that of the matrix obtained by concatenating all weight matrices of $\net$.


\paragraph{Training set.} A training set  $(\mathbf{X}, \mathbf{y})$
consists of $n$ examples $(\mathbf{x}_i, y_i)$ sampled i.i.d.\ from an unknown distribution $p$ on $\cX \times \{-1,+1\}$ where the
input space $\cX$ is an Euclidean ball of radius one.
%
Furthermore, we denote by $(\mathbf{X}^{(j)}, \mathbf{y}^{(j)})$ the training dataset obtained from
$(\mathbf{X}, \mathbf{y})$ by re-sampling $j$-th example according to $p$ independently and we note $(\bx^{(j)}, y^{(j)})$ the new sample.
Finally, we say that $\net$ interpolates $(\mathbf{X}, \mathbf{y})$
if $\net (\mathbf{x}_i) = y_i$ for each $i \in [n]$.

\paragraph{Minimum-norm interpolating neural network.}
Given a neural architecture $\mathbb{A} = \langle L, d, d_0 \rangle$ and a training set $(\bX, \by)$, we assume that we have access to an algorithm $\cT$ which returns the parameters of a minimum-norm interpolating neural network instantiating $\mathbb{A}$. We then denote
\begin{align*}
  \cT(\bX, \by) = \argmin_{\th}\cbr{\|\th\|^2 ~:~ \forall i \in [n] \quad \net(\bx_i; \, \th) = y_i}.
\end{align*}
Finally, we denote the  parameters obtained by algorithm $\cT$ as $\hat \th = \cT(\bX, \by)$.

%

Training to interpolation and minimum-norm solutions are common assumptions which represent idealized views of gradient-based training algorithms (in particular with weight decay) \citep{timor2023implicit,galanti2023characterizing}. For example, in the limit of infinite training time, gradient flow on ReLU networks converges to a minimum-norm interpolant \citep{lyu2019gradient,ji2020directional}.

\paragraph{Algorithmic stability.}
Algorithm $\mathcal{A}$ is $\beta$-uniformly $\epsilon$-stable~\citep{kutin2002almost} with respect to a data distribution $p$ if, for each training set $(\mathbf{X}, \mathbf{y})$ sampled from $p$,
the following holds
for
$i \in [n]$, where $\hat \th = \cA(\bX, \by)$ and $\hat \th^{(i)} = \cA(\bX^{(i)}, \by^{(i)})$:
\[
  \P\left(
    \left| \net(\bx; \, \hat \th) - \net( \bx; \, \hat \th^{(i)}) \right| \leq \epsilon
  \right)
  \geq 1 - \beta
\]
  where $\P()$ is taken with respect to the jointly distributed $(\bX, \by, \bx^{(i)}, y^{(i)}, \bx, y)$.

  This notion of stability is weaker than the
  well-known notion of $\epsilon$-\emph{uniform stability}~\citep{bousquet2002stability}: $\sup_{\bX, \by, \bx, y, i} | \net(\bx; \, \hat \th) - \net( \bx; \, \hat \th^{(i)}) | \leq \epsilon$,
  however both coincide for $\beta = 0$ almost surely.
  Often, we will say that the algorithm is stable when for a fixed $\beta$, $\epsilon = \mathcal{O}_{n \to \infty} (n^{-\alpha})$ for some $\alpha > 0$.
  %
  We will occasionally abuse terminology and speak of a \emph{stable} minimum-norm interpolating neural network $\net (\, \cdot \,; \, \hat \th)$, meaning that algorithm $\cT$ generating $\hat \th$ is stable.

  \section{From sub-network stability to prediction stability}
  \label{sec:main}
In this section, we present our main results and discuss their implications. Our results depend on two technical assumptions and one main hypothesis, which we introduce next. The first assumption concerns the complexity of the learning problem~\citep{timor2023implicit}:
\begin{assumption}[$B$-admissible training set]
  \label{asm:data}
  Given a finite $B >0$, we call a training set $B$-admissible if
  there exists a neural network $\net$ of architecture $\ip{L^*, d, d_0}$ for some $L^* \geq 2$ with parameters $\th^*$ such that
  $\net(\bx_i; \, \th^*) = y_i$ for $i \in [n]$ and its weight matrices satisfy $\max_k \|\bW^*_k\|_F \leq B$.
\end{assumption}
Under this assumption, the training set can be viewed as generated by a hypothetical teacher network whose weight matrices have bounded norms.
%
Note that Assumption \ref{asm:data} is only meaningful in learning scenarios where data cannot be arbitrarily complex (otherwise, one can construct instances such that $B \to \infty$ as $n \to \infty$).
%
%


%
%
We next define our notion of sub-network.
\begin{definition}[Sub-network]
  \label{def:sub-network}
  Given $1 \leq k \leq L-1$, consider the following decomposition of weight matrix $\bW_k \in \mathbb{R}^{d \times d}$:
  $\bW_k  = \lambda_k \bu_k {\bv_k}\tp + \bW_k^{\epsilon}$ where $\lambda_k > 0$ is the leading singular value of $\bW_k$, $\bu_k, \bv_k$ are unitary vectors, $\bu_k$ is the leading left singular vector of $\bW_k$ and $\bv_k$ is the leading right singular vector of $\bW_k^T$.
  Then, a \textbf{sub-network} at position $k$ is defined as
  \begin{align}\label{eq:sub-network}
    \fsub_k(\bx; \, \th) := \bv_k\tp \net^{1:k-1}(\bx; \, \th)~.
  \end{align}
\end{definition}

In relation to the recent results from \citet{timor2023implicit}, we assume that our trained deep networks contain at least one layer with low stable rank.
\begin{assumption}[Low-rank bias]\label{existence low stable rank}
  Given $a >0$, $\epsilon = M / n^{-\alpha}$, for some $M \geq 0, \alpha > 0$ and a network $\net (\th, \cdot)$ for $\mathbb{A} = \langle L, d, d_0 \rangle$, there exists $1 \leq k \leq L-1$ such that the following holds:
  \begin{equation}\label{eq:low stable rank}
      \mathrm{S} (\bW_k) \leq 1 + a \, \epsilon ~.
  \end{equation}
\end{assumption}
Finally, we state our key hypothesis, namely the existence of a stable sub-network.
\begin{hypothesis}[$\beta$-uniformly $\epsilon$-stable sub-network]
  \label{asm:stable-subnetwork}
  Given $\beta \in [0,1]$ and $\epsilon = M / n^{-\alpha}$, for some $M \geq 0, \alpha > 0$, and $L \geq L^*$, with parameters $\hat \th$ and $\hat \th^{(i)}$ generated by algorithm $\cT$ for architecture $\mathbb{A} = \langle L, d, d_0 \rangle$, the following holds: 
    \begin{align*}
      \P\pr{\abs{ \fsub_k(\bx; \, \hat \th) - \fsub_k(\bx; \, \hat \th^{(i)})} \leq \epsilon} \geq 1 - \beta~.
    \end{align*}
  \end{hypothesis}
%
  
%
%
  %

Now we present our main results. First, we establish that if a network admits, under minimum-norm interpolation, a stable sub-network followed by a low-rank layer, then the network is stable (a complete statement of this result is given in \Cref{stable sub network with stable rank detailed}).

\begin{mainresult}[\emph{sketch}]\label{stable sub network with stable rank}
  Let $a >0$ and $\epsilon = n^{-\alpha}$ for some $\alpha > 0$ and
  suppose that there exist $(B, k, L, \beta)$ such that
  Assumptions \ref{asm:data}, \ref{existence low stable rank} and Hypthesis \ref{asm:stable-subnetwork} are satisfied.
  Then, assuming that sample size satisfies
  $n = \Omega(\max(a \, B^L, B^{L-k + 1}))$, there is a universal
  constant $C > 0$ such that $\cT$ is $\beta$-uniformly
  \begin{align*}
    C \, \big(1 + B^{2L -k +1} + a \, B^{3 L-k+1} \big) \, \epsilon \,\, \text{---} \,\, \text{stable}~.
  \end{align*}
\end{mainresult}
The main implication of the above result is that, with a low-rank layer, the stability of the entire minimum-norm interpolating neural network is controlled by the stability of its sub-network. 

%
The proof requires the key assumption that the stable sub-network is followed by a layer with a weight matrix of a low stable rank, which will preserve the signal as it propapagates into deeper layers.
%

It is actually possible to construct a counterexample of a deep network which admits a stable sub-network upon minimum-norm interpolation but which is unstable, when it is not followed by a low-rank layer. 
\begin{mainresult}\label{unstable network high rank}
    There exists a data distribution $p$ on $\cX \times \{+1, -1\}$, a neural architecture $\mathbb{A}$, $\epsilon = n^{-\alpha}$ for some $\alpha > 0$, and $(B, k, L, \beta)$ such that:
    \begin{itemize}
        \item Assumption \ref{asm:data} and Hypothesis \ref{asm:stable-subnetwork} hold; and
        \item there exists a constant $C > 0$ such that with probability at least $1 - \beta$ the following holds

    \[
  \left| \net(\bx; \, \hat \th) - \net( \bx; \, \hat \th^{(i)}) \right| > C
\]
    \end{itemize}
\end{mainresult}

These results combined suggest an important interplay between stability and low-rank bias. In the configuration of \Cref{stable sub network with stable rank}, the stable sub-network effectively exhibits deep neural collapse \citet{NEURIPS2023_a60c43ba}, a phenomenon observed in gradient-based training of deep neural networks. Low-rank layers would, in this scenario, play the role of preserving the stability through the layers following the deep neural collapse. Recently \citet{timor2023implicit} showed that the stable rank decays roughly at the rate $B^{\frac{L}{k}}$ where $B$ can be interpreted as a problem complexity term (see Assumption \ref{asm:data}).\footnote{In fact, they showed a slightly stronger upper bound on the harmonic mean: $\frac{k}{\sum_{j=1}^k(1/\mathrm{S}(\widehat \bW_j))} \leq B^{\frac{L}{k}}$.}
In our case, the stable rank decay rate is assumed to be $a \, \epsilon$ with a free parameter $a \geq 0$.

Note that, in \Cref{stable sub network with stable rank}, while the overall stability scales linearly with the
stability of the sub-network, the bound is attenuated by a
$B$-dependent factor.  One extreme (pessimistic) case is $a \gg 0$,
that is when weight matrices are sufficiently far from rank-one.
Then, the factor in the worst case becomes of order $B^{3 L}$.
Intuitively, this occurs because, learning all perturbation matrices
$(\bW_k^{\epsilon})_k$ (see \Cref{def:sub-network})
becomes unavoidable.
In another extreme case of rank-one matrices and a shallow stable
sub-network ($a=0, k=2$), we have an overall stability bound of order
$B^{2 L} \epsilon$.
%
%

In a more optimistic scenario, $a$ is sufficiently small such that
weight matrices have a stable-rank close to one.  In this case we have a
dominant factor $B^{2 L-k+1}$, which captures the cost of training a
neural network that is deeper than the sub-network.  In particular,
observe that the deeper the stable sub-network is (increasing $k$),
the smaller the cost.  For a deep stable sub-network ($k=L-1$), we
reach a factor $B^{L}$ which is similar to dimension-free analysis of
deep ReLU neural networks~\citep{golowich2018size}.

\subsection{Proof of the main results}\label{stable rank}
%



\subsubsection{Stability with a low-rank layer}

We start by restating more precisely \Cref{stable sub network with stable rank}.

\begin{theorem}\label{stable sub network with stable rank detailed}  
  Suppose that datasets are $B$-admissible according to Assumption \ref{asm:data}. Let $\epsilon = M / n^{-\alpha}$, for some $M \geq 0, \alpha > 0$ and let $a > 0$, consider $L \geq L^*$ and $1 \leq k \leq L-1$ that satisfy \eqref{eq:low stable rank}.
  Suppose that the sub-network is $\beta$-uniformly $\epsilon$-stable according to Hypothesis \ref{asm:stable-subnetwork}.
  Then, assuming that the sample size satisfies
  $
  n \geq \max\pr{a \cdot
    2 M  B^L , \, 4 M   B^{L-k + 1}
  }^{\frac{1}{\alpha}}
  $,
  $\mathcal{T}$ is $\beta$-uniformly $\epsilon'$-prediction stable with
  \begin{align*}
    \epsilon' = &
    (\cy + 8 \cymult  B^{2L-k+1}) \, \epsilon
    \\& +
    2 a \pr{B^L + B^{2L-k+1} \epsilon + 4 B^{3L-k+1}} \, \epsilon      
    ~.
  \end{align*}
  \end{theorem}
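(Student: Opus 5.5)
The plan is to write the network, on any input, as a function of the scalar sub-network output $\fsub_k$ plus a controlled error. Then we show that the two minimum-norm solutions $\hat\th$ and $\hat\th^{(i)}$ must implement essentially the same scalar map on the range of $\fsub_k$. Concretely, I use the decomposition $\hat\bW_k = \lambda_k \bu_k\bv_k\tp + \bW_k^\epsilon$ from \Cref{def:sub-network}. By \eqref{eq:low stable rank}, $\|\bW_k^\epsilon\|_F^2 = \|\hat\bW_k\|_F^2 - \lambda_k^2 \le ((1+a\epsilon)^2-1)\lambda_k^2 \lesssim a\epsilon\,\lambda_k^2$. Replacing $\hat\bW_k$ by its rank-one part changes the network output by at most $\|\bW_k^\epsilon\|_2$ times the product of the other layer norms. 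The key norm input is Assumption \ref{asm:data}. Padding the teacher with identity-like layers (using $\relu(z)-\relu(-z)$, or assuming $L\ge L^*$ with equal-norm rebalancing) gives an interpolant of depth $L$ whose total norm is $O(L B^2)$. Hence, by minimality and equi-distribution of norm across layers for homogeneous nets, every $\|\hat\bW_\ell\|_F \le B$. This produces the $a B^L\epsilon$ term. Writing $g(\bx)=\fsub_k(\bx;\hat\th)$, the rank-one part gives
\[ \net(\bx;\hat\th) = \lambda_k\, \relu(g(\bx))\, c_+ + \lambda_k\,\relu(-g(\bx))\, c_- + O(aB^L\epsilon), \]
with $c_\pm = \net^{k+1:L}(\pm\bu_k)$. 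This holds because after layer $k$ the representation is $\relu(\lambda_k g\,\bu_k)$ and the deeper layers are positively homogeneous. The same formula holds for $\hat\th^{(i)}$ with its own $\lambda_k^{(i)}, c_\pm^{(i)}$.

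Next I establish the margin. For every training point, $|y_j|=1$ forces $\lambda_k|g(\bx_j)|\,|c_\pm| \ge 1 - O(aB^L\epsilon)$. Since $\lambda_k|c_\pm|\le B^{L-k+1}$, we get $|g(\bx_j)| \ge \alpha := (1-O(aB^L\epsilon))/B^{L-k+1}$. The sample-size condition $n^\alpha \ge \max(2aMB^L, 4MB^{L-k+1})$ is exactly what makes $O(aB^L\epsilon)\le 1/2$ and $\epsilon \le \alpha/2$. The latter guarantees that a stable perturbation of size $\epsilon$ of $g$ cannot flip its sign on points with margin. Interpolation then pins down the scalar map: on the $n-1$ shared training points, both networks equal $y_j$. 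If both signs occur among the shared points with each label, this determines $\lambda_k c_\pm \cdot g(\bx_j)$, and hence the slopes $\lambda_k c_\pm$ relative to $g$, up to $O(aB^L\epsilon)$.

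Then I transfer to the resampled solution using Hypothesis \ref{asm:stable-subnetwork}. With probability $1-\beta$, $|g - g^{(i)}|\le\epsilon$ at the test point. I also need it at the shared training points, or I use the interpolation equations directly to compare the slopes $s_\pm=\lambda_k c_\pm$ and $s_\pm^{(i)}$. On each sign region the networks are $s_\pm\,\relu(\pm g)$ and $s^{(i)}_\pm\,\relu(\pm g^{(i)})$. Their difference at $\bx$ splits as $|s_\pm|\,|g-g^{(i)}| + |s_\pm - s^{(i)}_\pm|\,|g^{(i)}|$, plus the two low-rank errors. Bounding $|s_\pm|\le B^{L-k+1}$ and $|s_\pm-s^{(i)}_\pm|\lesssim B^{L-k+1}\cdot(\epsilon/\alpha)$ gives the $B^{2L-k+1}\epsilon$ terms. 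The $a$-dependent cross terms, propagated through the $1/\alpha$ division, give the $aB^{3L-k+1}\epsilon$ and $aB^{2L-k+1}\epsilon^2$ terms. Collecting constants should reproduce $\epsilon'$.

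The main obstacle is the slope-identification step. Interpolation fixes $s_\pm g(\bx_j)$, but not $s_\pm$ and $g$ separately, while stability only controls $g$ up to $\epsilon$. To compare $s_\pm$ with $s^{(i)}_\pm$, I would take a shared training point $\bx_j$ in the relevant sign region. I then write $s_\pm g(\bx_j) \approx y_j \approx s_\pm^{(i)} g^{(i)}(\bx_j)$ and divide by $|g(\bx_j)|\ge\alpha$. This needs the sub-network stability to hold at that training point too, for which I would use a union/probability argument or treat the training point as an exchangeable draw. A further requirement is that both sign regions are populated, since otherwise only the one relevant region matters.
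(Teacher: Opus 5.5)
Your route is the paper's: the decomposition $\net+b\,\epsilon=C^{\pm}\fsub_k$ of \Cref{low stable rank preserve signal}, the margin of \Cref{margin stable rank}, identification of $C_{\hat\th},C_{\hat\th^{(i)}}$ at a shared training point, and the split $|C_{\hat\th}|\,\epsilon+\mu\,|C_{\hat\th}-C_{\hat\th^{(i)}}|$. However, the step ``this produces the $aB^L\epsilon$ term'' does not follow.

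\textbf{The main gap.} Your own computation gives $\|\bW_k^\epsilon\|_F^2\le(2a\epsilon+a^2\epsilon^2)\lambda_k^2$. That yields only $\|\bW_k^\epsilon\|_2\le\|\bW_k^\epsilon\|_F\le\sqrt{a\epsilon(2+a\epsilon)}\,\lambda_k$, so replacing $\hat\bW_k$ by its rank-one part costs $O(\sqrt{a\epsilon}\,B^L)$, not $O(a\epsilon B^L)$. This cannot be sharpened from \eqref{eq:low stable rank}. Take $\bW=\mathrm{diag}(1,\delta)$ with $\delta^2=2a\epsilon$: then $\mathrm{S}(\bW)\le 1+a\epsilon$, yet $\|\bW^\epsilon\|_2=\sqrt{2a\epsilon}$. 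Nothing in the hypotheses controls the component of $\net^{1:k-1}(\bx)$ orthogonal to $\bv_k$, since sub-network stability only constrains $\bv_k\tp\net^{1:k-1}$.

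The paper's proof of \Cref{low stable rank preserve signal} makes exactly this leap: it asserts $\mathrm{S}(\bW_k)\le 1+a\epsilon\Rightarrow\|\bW_k^\epsilon\|_F\le a\epsilon\,\|\bW_k\|_F$. Following the paper therefore does not close the gap. With the correct scaling, your margin step needs $\sqrt{a\epsilon(2+a\epsilon)}\,B^L\le 1/2$ (e.g.\ $n^\alpha\ge 12\,aMB^{2L}$) rather than $n^\alpha\ge 2aMB^L$. The $a$-dependent part of $\epsilon'$ then comes out only as $O(\sqrt{a\epsilon})$ times a power of $B$. To recover the stated linear bound along this route, you need to assume directly that $\|\bW_k^\epsilon\|_F\le a\epsilon\,\|\bW_k\|_F$, which amounts to $\mathrm{S}(\bW_k)\le 1+O(a^2\epsilon^2)$.

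\textbf{Two smaller points.} First, you are right that the slope comparison needs sub-network stability at a shared training input; the paper uses it there (step (a)) without comment. Neither a union bound nor exchangeability supplies it, though. That input belongs to both training sets, so it is not distributed like the evaluation point in Hypothesis~\ref{asm:stable-subnetwork}, which is independent of both training sets. Stability on training inputs has to be added as an assumption, and it costs a second $\beta$ in the failure probability.

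Second, padding the depth-$L^*$ teacher with identity layers and rebalancing only gives $\|\hat\bW_\ell\|_F\le\max(B,\sqrt d)$, not $\le B$, because those identity layers have Frobenius norm $\sqrt d$. The paper sidesteps this by silently treating the teacher as having depth $L$.
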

  
  The proof relies on Assumption \ref{existence low stable rank}, \Cref{low stable rank preserve signal,margin stable rank}.
First, we discuss high-level proof ideas.
\paragraph{Some proof ideas}
For simplicity consider a rank-one case ($a=0$).
The key \Cref{low stable rank preserve signal} shows that if a deep ReLU network interpolates the data, then the prediction done at the rank-one layer is maintained throughout the rest of the layers.
Consider a decomposition $\bW_k = \lambda_k \bu_k \bv_k\tp$, where $\bv_k$ defines a hyperplane which separates inputs in the feature space of the previous layer given by $\net^{1:k-1}(\, \cdot \,, \hat \th)$.
Then, multiplication by $\lambda_k \bu_k$ and the propagation through all subsequent layers is just a rescaling of the predictions to fit the labels.
In other words, the sign of $\fsub_{k}(\bx) = \bv_k\tp \net^{1:k-1}(\bx, \hat \th)$ for each $\bx$ already determines the final output and the prediction is done at the sub-network level (see \cref{fig:compression} in \cref{app:experiments} for further discussion).

Next, in \Cref{margin stable rank}, we show that if a deep ReLU network is a minimum-norm interpolant of the data, the intermediate prediction of data points at the sub-network level must have substantial margin $\gamma$.
Intuitively, if the margin were infinitesimally small, this would require a large contribution in (one of the weights of) the subsequent layers to compensate and yield an output of order 1, which is forbidden by the fact that the network has minimum norm.





%

\begin{lemma}\label{low stable rank preserve signal}
  Consider a subnetwork $f_k$ as defined in \Cref{def:sub-network}.  
  Then, there exists a bounded function $b(\,\cdot\, ; \, \th)$,
  and $C_{\th}^+, C_{\th}^+ > 0$ (independent from the input),
  such that, for all $\bx \in \mathbb{R}^{d_0}$ the following is true:
  \begin{enumerate}
  \item If $\fsub_k(\bx; \, \th) > 0$, then
    $\net(\bx) + b (\bx; \, \th) \cdot \epsilon = C_{\th}^+ \cdot \fsub_k(\bx; \, \th)$~.
  \item If $\fsub_k(\bx; \, \th) \leq 0$, then
    $\net(\bx) +  b (\bx; \, \th) \cdot \epsilon = C_{\th}^- \cdot \fsub_k(\bx; \, \th)$~.
  \item For all $\bx \in \mathbb{R}^{d_0}$, $|b (\bx; \, \th)| \leq a \, \pr{\prod_{j=1}^L \|\bW_j\|_2}$~.
    \item $\left\| \bW_k^{\epsilon} \; \net^{1:k-1}(\bx) \right\| \leq a \, \epsilon \, \prod_{j=1}^k \|\bW_j\|_2$~.
  \end{enumerate}
  %
\end{lemma}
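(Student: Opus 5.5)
The plan is to split the network at layer $k$ into a rank-one part and a perturbation, and to track each part through the remaining layers using positive homogeneity of $\relu$. Write $\bz(\bx) := \net^{1:k-1}(\bx;\th)$. Then the pre-activation at layer $k$ is
\[
\mathbf{h}_k = \bW_k \bz = \lambda_k \fsub_k(\bx;\th)\, \bu_k + \bW_k^{\epsilon}\bz .
\]
Let $g:\R^{d}\to\R$ denote the map from $\mathbf{h}_k$ to the output, namely $\relu$ followed by layers $k+1,\dots,L$ (when $k = L$ this is just the identity). Then $g$ is positively homogeneous of degree one and Lipschitz with constant $\prod_{j>k}\|\bW_j\|_2$.

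\textbf{Step 1: the rank-one term.} Define
\[
C_\th^+ := \lambda_k\, g(\bu_k), \qquad C_\th^- := -\lambda_k\, g(-\bu_k).
\]
If $\fsub_k>0$, homogeneity gives $g(\lambda_k \fsub_k \bu_k) = \fsub_k\, C_\th^+$. If $\fsub_k\le 0$, writing $\lambda_k\fsub_k\bu_k = \lambda_k|\fsub_k|(-\bu_k)$ gives $g(\lambda_k \fsub_k \bu_k) = C_\th^- \fsub_k$. Neither constant depends on $\bx$.

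The statement asks for these constants to be positive. That is not automatic from homogeneity alone. I would either accept nonnegativity here and argue positivity later, using interpolation of labels of both signs together with the margin lemma, or note that only the linear representation is actually used. I treat this as a minor point.

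\textbf{Step 2: define $b$ and control it.} Set
\[
b(\bx;\th)\,\epsilon := g(\lambda_k\fsub_k\bu_k) - g(\mathbf{h}_k).
\]
With this definition, items 1 and 2 hold by construction, and the remaining work is to bound $b$. By the Lipschitz property of $g$,
\[
|b|\,\epsilon \le \Big(\prod_{j>k}\|\bW_j\|_2\Big)\, \|\bW_k^\epsilon \bz\|,
\]
so item 3 reduces to item 4 together with $\|\bz\|\le\prod_{j<k}\|\bW_j\|_2\,\|\bx\|$.

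\textbf{Step 3 (the main obstacle): item 4.} Since $\bu_k,\bv_k$ are the leading singular vectors, $\bW_k^\epsilon$ consists of the remaining singular components, so
\[
\|\bW_k^\epsilon\|_2 = s_2 \le \Big(\textstyle\sum_{j\ge 2}s_j^2\Big)^{1/2} = \lambda_k\sqrt{\mathrm S(\bW_k)^2-1}.
\]
Under \eqref{eq:low stable rank} this gives $\sqrt{(1+a\epsilon)^2-1} = \sqrt{2a\epsilon+a^2\epsilon^2}$. That is of order $\sqrt{a\epsilon}$, not the claimed $a\epsilon$.

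To obtain a linear bound I would try, first, reading the low-rank condition as a bound on the residual ratio $\|\bW_k^\epsilon\|_F/\lambda_k \le a\epsilon$, which is what the lemma effectively uses. If the paper's definition has to be kept literally, the fallback is to state the bound with $\sqrt{2a\epsilon + a^2\epsilon^2}$ and note that it is $\le a\epsilon$ once the constants are redefined. In either case,
\[
\|\bW_k^\epsilon\bz\| \le a\epsilon\,\lambda_k \prod_{j<k}\|\bW_j\|_2 \,\|\bx\| ,
\]
and $\lambda_k = \|\bW_k\|_2$ and $\|\bx\|\le 1$ on $\cX$ give item 4. The restriction to $\cX$ is needed because the statement says "for all $\bx\in\R^{d_0}$", but the bounds only hold uniformly on the unit ball.

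Combining this with Step 2 yields
\[
|b| \le a\prod_{j=1}^L\|\bW_j\|_2 ,
\]
which is item 3.
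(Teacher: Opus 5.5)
Your argument follows the paper's proof step for step. You use the same $b$ (the discrepancy between $\net$ and the network with $\bW_k$ replaced by $\lambda_k\bu_k\bv_k\tp$), the same constants $C_\th^{\pm}$ obtained from positive homogeneity, and the same Lipschitz propagation of $\bW_k^{\epsilon}\net^{1:k-1}(\bx)$ through the later layers. The paper does not prove positivity of $C_\th^{\pm}$ either; it remarks that $C^+$ can take either sign. You also cannot ``accept nonnegativity'', since $\bW_L$ is signed. What does hold is this: if labels of both signs are interpolated and $|b|\epsilon<1$ on the training points, then $C_\th^+$ and $C_\th^-$ are nonzero with a common sign. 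Replacing $(\bu_k,\bv_k)$ by $(-\bu_k,-\bv_k)$ maps $(C_\th^+,C_\th^-)$ to $(-C_\th^-,-C_\th^+)$, which makes both positive. Your remark that items 3--4 need $\|\bx\|\le 1$ is right; the paper uses it in its last step.

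Your Step 3 objection is correct, and it identifies an actual error in the paper's proof. The paper asserts without justification that $\mathrm{S}(\bW_k)\le 1+a\epsilon$ implies $\|\bW_k^{\epsilon}\|_F\le a\epsilon\|\bW_k\|_F$. The true consequence is only $\|\bW_k^{\epsilon}\|_F\le\sqrt{1-(1+a\epsilon)^{-2}}\,\|\bW_k\|_F$. For a counterexample, take $k=2$, $\bW_1=I$, $\bW_2=\mathrm{diag}(1,s)$ with $s=\sqrt{2a\epsilon+a^2\epsilon^2}$ and $a\epsilon<\sqrt2-1$, $\bW_3=\mathbf{e}_2\tp$, and $\bx=\mathbf{e}_2$. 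Then $\mathrm{S}(\bW_2)=1+a\epsilon$ and $\fsub_2(\bx)=0$, yet $\|\bW_2^{\epsilon}\net^{1:1}(\bx)\|=s>a\epsilon$ and $|b(\bx)|=s/\epsilon>a$ for any admissible choice of $C_\th^{\pm}$. So items 3 and 4 fail under the literal assumption.

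Your primary repair is what is needed: read the assumption as $\|\bW_k^{\epsilon}\|_F\le a\epsilon\lambda_k$, equivalently $\mathrm{S}(\bW_k)^2\le 1+a^2\epsilon^2$. With it, your spectral-norm bookkeeping proves items 3--4 exactly as stated. This is sharper than the paper's argument, which uses Frobenius norms throughout and so only yields $\prod_j\|\bW_j\|_F$.

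Your fallback does not work. Since $\sqrt{2a\epsilon+a^2\epsilon^2}/(a\epsilon)\to\infty$ as $\epsilon\to 0$, no $\epsilon$-independent redefinition of $a$ makes the bound $O(\epsilon)$. Keeping the assumption as written genuinely degrades the rate to $\sqrt{\epsilon}$. That loss propagates into the margin lemma and the final stability bound.
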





\begin{lemma}\label{margin stable rank}
  Consider a subnetwork $f_k$ as defined in \Cref{def:sub-network}.
  Suppose that datasets are $B$-admissible according to Assumption \ref{asm:data}.
  Let $\hat \th = \cT(\bX, \by)$ and $\hat \th^{(i)} = \cT(\bX^{(i)}, \by^{(i)})$ and assume that $|y_i| \geq 1$ for all $i \in [n]$.    
  Then, assuming that the sample size satisfies $n \geq \left(2 \cdot M \cdot a \cdot B^L\right)^{\frac{1}{\alpha}}$,
  for any $\gamma \leq 1 / (2 \cdot B^{L-k + 1})$, and $\mu \geq B^{k-1}$,
  \begin{align*}
    \gamma \leq \left|\fsub_k(\bx_i; \, \hat \th) \cdot y_i \right| \leq \mu \qquad (\forall i \in [n]).
  \end{align*}
\end{lemma}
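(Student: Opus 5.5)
We prove the two inequalities separately. The upper bound follows from norm balancing and the lower bound from a norm-comparison argument against a rescaled teacher. Write $\hat\th=(\hat\bW_1,\dots,\hat\bW_L)$. First, for a minimum-norm interpolant the layers are balanced: $\|\hat\bW_j\|_F$ is the same for all $j$. Otherwise we could rescale $\hat\bW_j\mapsto c_j\hat\bW_j$ with $\prod_j c_j=1$. By positive homogeneity this leaves $\net(\cdot;\hat\th)$ unchanged, and by AM--GM it strictly decreases $\sum_j\|\hat\bW_j\|_F^2$ unless the norms are already equal.

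Next we compare with a teacher of depth $L$. Assumption \ref{asm:data} gives a depth-$L^*$ teacher with $\max\|\bW^*_j\|_F\le B$. We extend it to depth $L\ge L^*$ by inserting identity-like layers. After the first ReLU the representation is nonnegative, so an identity layer acts exactly as the identity. This gives an interpolant whose total squared norm is $O(LB^2)$. After balancing, the per-layer norms satisfy the same bound, so $\|\hat\bW_j\|_F\le B$ for every $j$, up to the convention the paper uses for $B$. I expect the insertion of identity layers to require $B\ge\sqrt d$ or a rescaling convention. I would follow \citet{timor2023implicit} and simply take $\|\hat\bW_j\|_2\le\|\hat\bW_j\|_F\le B$ for all $j$ as the consequence of Assumption \ref{asm:data}.

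\textbf{Upper bound.} Since $\|\bx_i\|\le1$ and $\bv_k$ is a unit vector,
\[
|\fsub_k(\bx_i;\hat\th)|\le\|\net^{1:k-1}(\bx_i)\|\le\prod_{j=1}^{k-1}\|\hat\bW_j\|_2\le B^{k-1}\le\mu .
\]
Combined with $|y_i|=1$, this gives the upper bound on $|\fsub_k(\bx_i;\hat\th)\,y_i|$.

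\textbf{Lower bound.} We use \Cref{low stable rank preserve signal}. Decompose $\hat\bW_k=\lambda_k\bu_k\bv_k\tp+\hat\bW_k^\epsilon$. The output at $\bx_i$ then splits into two parts. The first is the contribution of the layers after $k$ applied to $\lambda_k\bu_k\fsub_k(\bx_i)$. The second is an error of size $|b|\epsilon\le a\epsilon\prod_j\|\hat\bW_j\|_2\le a\epsilon B^L$. The sample-size condition $n\ge(2MaB^L)^{1/\alpha}$ makes this error at most $1/2$. Therefore
\[
1\le|\net(\bx_i)|\le|C^{\pm}_{\hat\th}|\,|\fsub_k(\bx_i)|+\tfrac12 .
\]
It remains to bound $|C^\pm|$. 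From the construction in \Cref{low stable rank preserve signal}, $C^\pm$ is $\lambda_k$ times the output of layers $k+1,\dots,L$ applied to $\pm\bu_k$, so $|C^\pm|\le\prod_{j=k}^L\|\hat\bW_j\|_2\le B^{L-k+1}$. This yields $|\fsub_k(\bx_i)|\ge 1/(2B^{L-k+1})\ge\gamma$.

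\textbf{Main obstacle.} The delicate step is the claim $\|\hat\bW_j\|_2\le B$ for each layer. The balancing argument only controls the total norm, which gives $\|\hat\bW_j\|_F^2\le \frac{1}{L}\sum_j\|\bW^{\mathrm{ext}}_j\|_F^2$, where $\bW^{\mathrm{ext}}_j$ are the layers of the extended teacher. The identity padding layers must therefore have norm at most $B$. If they do not, I would replace the padding with a two-sided rescaling of the teacher, or absorb the discrepancy into the constant. The margin argument itself is direct once this bound is in place.
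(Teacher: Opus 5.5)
Your proof is correct and is essentially the paper's argument: balancing of layer norms (the paper cites \Cref{lem:lemma14}) gives $\|\hat\bW_\ell\|_F\le B$, Cauchy--Schwarz gives the upper bound $B^{k-1}$, and the lower bound comes from $|y_i|\ge 1$, a low-rank perturbation term of size at most $1/2$ under the sample-size condition, and division by $\hat\lambda_k\prod_{\ell>k}\|\hat\bW_\ell\|\le B^{L-k+1}$; the only cosmetic difference is that you split at the output via $b$ and $C^{\pm}$ (items 1--3 of \Cref{low stable rank preserve signal}), whereas the paper first pulls out $\prod_{\ell>k}\|\hat\bW_\ell\|$ and then splits the layer-$k$ pre-activation using item 4. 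The padding issue you flag is genuine, and the paper does not resolve it either: its proof simply writes $\frac1L\sum_{\ell=1}^L\|\bW^*_\ell\|_F^2\le B^2$ as if the teacher already had depth $L$, so adopting that convention leaves your proof as rigorous as the paper's.
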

  
\begin{proof}[Proof of \Cref{stable sub network with stable rank}]
In the following let $(\mathbf{x}, y)$ be the point replacing the $i$th example in the training set $(\bX, \by)$ and so by interpolation we have $ y = \net(\bx ; \, \hat \th^{(i)})$.

By \Cref{low stable rank preserve signal} there exist $C_{\hat \th}, C_{\hat \th^{(i)}}$ independent from the input and $b(\,\cdot\,;\, \hat \th)$, $b(\,\cdot\,;\, \hat \th^{(i)})$ such that
\begin{align}\label{eq:stable_sub_network_srank_1}
  &\net(\bx; \, \hat \th) + b (\bx; \, \hat \th) \cdot \epsilon = C_{{\hat \th}} \cdot \fsub_k(\bx; \, {\hat \th})~,
  \\&
  \net(\bx; \, \hat \th^{(i)}) + b (\bx; \, \hat \th^{(i)}) \cdot \epsilon = C_{{\hat \th^{(i)}}} \cdot \fsub_k(\bx; \, {\hat \th^{(i)}})~.
\end{align}
Observe that,
\begin{align*}
  &\left| \net(\bx; \, \hat \th) - \net(\bx; \, \hat \th^{(i)}) \right|
    -
    \abs{\pr{b (\bx; \, \hat \th)  - b (\bx; \, \hat \th^{(i)})}\cdot \epsilon}\\
  &\quad \leq \left| \net(\bx; \, \hat \th) - \net(\bx; \, \hat \th^{(i)}) + (b (\bx; \, \hat \th)  - b (\bx; \, \hat \th^{(i)}))\cdot \epsilon \right|\\
      &\quad \leq \left| C_{\hat \th} \cdot  \fsub_k(\bx; \, \hat \th) - C_{\hat \th} \cdot \fsub_k(\bx; \, \hat \th^{(i)}) \right| \\&\quad+  \left|C_{\hat \th} \cdot \fsub_k (\bx; \, \hat \th^{(i)}) - C_{\hat \th^{(i)}} \cdot \fsub_k (\bx; \, \hat \th^{(i)}) \right| \\
  &\quad \leq \left| C_{\hat \th} \right | \cdot  \left| \fsub_k(\bx; \, \hat \th) - \fsub_k(\bx; \, \hat \th^{(i)}) \right| \\&\quad+ \left| \fsub_k (\bx; \, \hat \th^{(i)}) \right| \cdot \left|C_{\hat \th} - C_{\hat \th^{(i)}}  \right|\\
  &\quad \leq \left| C_{\hat \th} \right | \cdot  \epsilon + \left| \fsub_k (\bx; \, \hat \th^{(i)}) \right| \cdot \left|C_{\hat \th}  - C_{\hat \th^{(i)}}  \right| \tag{By sub-network stability assumption}\\
    &\quad \leq \left| C_{\hat \th} \right | \cdot  \epsilon + \mu \cdot \left|C_{\hat \th}  - C_{\hat \th^{(i)}}  \right|~.
      \tag{By \cref{margin stable rank}}
\end{align*}
First note that by \Cref{low stable rank preserve signal},
\begin{align*}
  |b (\bx; \, \hat \th)| \leq a \prod_{j=1}^L \|\widehat \bW_j\|_2 \leq a \, B^L =: b
\end{align*}
where $\|\widehat \bW_j\|_2 \leq B$ comes as in the first part of the proof of \Cref{margin stable rank}.
Similarly, $|b (\bx; \, \hat \th^{(i)})| \leq b$.
Then,
\begin{align*}
  \abs{\pr{b (\bx; \, \hat \th)  - b (\bx; \, \hat \th^{(i)})}\cdot \epsilon}
  \leq 2 \, a \, \epsilon \, B^{L}~.
\end{align*}
Now, the idea is to use \cref{eq:stable_sub_network_srank_1} to control $|C_{\hat \th}|$ and to control $ |C_{\hat \th}  - C_{\hat \th^{(i)}}  |$ in terms of difference of sub-network predictions and invoke sub-network stability once again.
  W.l.o.g., let $i \neq 1$ and so
  \begin{align*}
    C_{\hat \th}
    &=
    \frac{\net(\bx_1; \, \hat \th) + b (\bx_1; \, \hat \th) \cdot \epsilon}{\fsub_k(\bx_1; \, \hat \th)}
    =
      \frac{y_1 + b (\bx_1; \, \hat \th) \cdot \epsilon}{\fsub_k(\bx_1; \, \hat \th)}~,\\
    C_{\hat \th^{(i)}}
    &=
    \frac{\net(\bx_1; \, \hat \th^{(i)}) + b (\bx_1; \, \hat \th^{(i)}) \cdot \epsilon}{\fsub_k(\bx_1; \, \hat \th^{(i)})}
    =
    \frac{y_1 + b (\bx_1; \, \hat \th^{(i)}) \cdot \epsilon}{\fsub_k(\bx_1; \, \hat \th^{(i)})}~.
  \end{align*}
  So, by \Cref{low stable rank preserve signal} and \Cref{margin stable rank} (with $\gamma$ defined therein),
  \begin{align*}
    |C_{\hat \th}|
    \leq
      \frac{|y_1| + b \cdot \epsilon}{|\fsub_k(\bx_1; \, \hat \th)|}
    \leq
      \frac{|y_1| + b \cdot \epsilon}{\gamma}~.
  \end{align*} 
  Now we turn our attention to the gap:
  \begin{align*}
    \left|C_{\hat \th}  - C_{\hat \th^{(i)}}  \right|
    &=
    \abs{
    \frac{y_1 + b (\bx_1; \, \hat \th) \cdot \epsilon}{\fsub_k(\bx_1; \, \hat \th)}
    -
    \frac{y_1 + b (\bx_1; \, \hat \th^{(i)}) \cdot \epsilon}{\fsub_k(\bx_1; \, \hat \th^{(i)})}
      }\\
    &\leq
      |y_1| \, \abs{
    \frac{\fsub_k(\bx_1; \, \hat \th) - \fsub_k(\bx_1; \, \hat \th^{(i)})}{\fsub_k(\bx_1; \, \hat \th) \cdot \fsub_k(\bx_1; \, \hat \th^{(i)})}
      }
      \\&\quad+
      \abs{
      \frac{b (\bx_1; \, \hat \th)}{\fsub_k(\bx_1; \, \hat \th)}
      -
      \frac{b (\bx_1; \, \hat \th^{(i)})}{\fsub_k(\bx_1; \, \hat \th^{(i)})}
      } \cdot \epsilon\\
    &\stackrel{(a)}{\leq}
    \frac{|y_1| \cdot \epsilon}{\abs{\fsub_k(\bx_1; \, \hat \th) \cdot \fsub_k(\bx_1; \, \hat \th^{(i)})}}
      \\&\quad+
      \frac{b \cdot \epsilon}
      {\abs{\fsub_k(\bx_1; \, \hat \th) \cdot \fsub_k(\bx_1; \, \hat \th^{(i)})}}\\
    &\stackrel{(b)}{\leq}
      \frac{2 \, (|y_1| + b) \, \epsilon}{(\gamma - \epsilon)_+^2 + \gamma^2 - \epsilon^2}
  \end{align*}
  where in step $(a)$ we used the assumption that sub-network is $\epsilon$-stable and \Cref{low stable rank preserve signal}, while step $(b)$ amounts to lower-bounding the denominator, and deferred till the end of the proof.

Putting all together we have
  \begin{align*}
    \left| \net(\bx; \, \hat \th) - \net(\bx; \, \hat \th^{(i)}) \right|
    \leq &
    2 \cdot b \cdot \epsilon
    +
    \frac{\cy + b \cdot \epsilon}{\gamma} \cdot \epsilon
    \\&\quad +
    \mu \cdot \frac{2 \, (\cy + b) \, \epsilon}{(\gamma - \epsilon)_+^2 + \gamma^2 - \epsilon^2}~.
  \end{align*}
  Now, let's require $\epsilon \leq \gamma / 2$:
  By assumption we have that $\epsilon = (M/n)^{\alpha}$ and so our requirement is equivalent to
  $n \geq \left((2 M)/\gamma\right)^{\frac{1}{\alpha}}$.
  In overall, this gives
  \begin{align*}
    \left| \net(\bx; \, \hat \th) - \net(\bx; \, \hat \th^{(i)}) \right|
    &\leq
    (2 b + \cy) \epsilon
      + \frac{b}{\gamma} \, \epsilon^2
    \\&\quad+
      2 \, (\cy + b) \, \frac{\mu}{\gamma^2} \, \epsilon~.
  \end{align*}
  We conclude by using the bounds on $\gamma$ and $\mu$ in \cref{margin stable rank}, and the one on $b$ which comes by \cref{low stable rank preserve signal}.
%

  \paragraph{Proof of step $(b)$}\label{eq:proof_of_step_star}
   By \Cref{margin stable rank} we have that
$
\abs{\fsub_k(\bx; \, \hat \th^{(i)}) }\geq \gamma
$.
Now, given the above, the fact that the sub-network is $\epsilon$-stable, and triangle inequality we have
\begin{align*}
  \epsilon
  &\geq \abs{ \fsub_k(\bx; \, \hat \th) -\fsub_k(\bx; \, \hat \th^{(i)}) }
  \\&\geq 
    \abs{\fsub_k(\bx; \, \hat \th^{(i)}) }
    -
    \abs{ \fsub_k(\bx; \, \hat \th)}
  \geq
    \gamma
    -
    \abs{ \fsub_k(\bx; \, \hat \th)}~.
\end{align*}
This gives
\begin{align*}
  &\pr{ \fsub_k(\bx; \, \hat \th) -\fsub_k(\bx; \, \hat \th^{(i)}) }^2 \leq \epsilon^2\\
  \Longleftrightarrow \qquad
  &\fsub_k(\bx; \, \hat \th)^2 + \fsub_k(\bx; \, \hat \th^{(i)})^2 - \epsilon^2
    \\&\qquad\leq 2 \, \fsub_k(\bx; \, \hat \th) \cdot \fsub_k(\bx; \, \hat \th^{(i)})\\
  \Longrightarrow \qquad
  &\frac{(\gamma - \epsilon)_+^2 + \gamma^2 - \epsilon^2}{2}
    \leq \fsub_k(\bx; \, \hat \th) \cdot \fsub_k(\bx; \, \hat \th^{(i)})~.
\end{align*}
\end{proof}

\subsubsection{Unstability without low-rank layer}

Next, we prove \Cref{unstable network high rank} by explicitly constructing a counterexample: a stable sub-network followed by a non low-rank layer, resulting in an unstable network. To guarantee that our counterexample is obtained through minimum-norm interpolation, we rely on the following lemma, which gives a template of minimum-norm solution in a specific toy data setting. 

\begin{lemma}\label{lem:min-norm-sol-toy}
    Consider the neural architecture $\mathbb{A} = \langle 3, 2, 2 \rangle$. Consider a training set on $\cX \times \{-1, + 1\}$ of two data points $(x, 0), +1$ and $(0,z), -1$ with $x > 0$ and $z > 0$. A minimum-norm interpolant is given by:
    $\hat \bW_1 = \begin{pmatrix}
        (\frac{1}{x})^{1/3} \; 0 \\
        0 \; (\frac{1}{z})^{1/3}
    \end{pmatrix}$, $\hat \bW_2 = \begin{pmatrix}
        (\frac{1}{x})^{1/3} \; 0 \\
        0 \;(\frac{1}{z})^{1/3}
    \end{pmatrix}$ and $\hat \bW_3 = \begin{pmatrix}
        (\frac{1}{x})^{1/3} \; -(\frac{1}{z})^{1/3}
    \end{pmatrix}$
\end{lemma}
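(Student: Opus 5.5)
We first check that the proposed weights interpolate, then show that their squared norm equals a lower bound valid for every interpolant.

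\textbf{Interpolation.} For the input $(x,0)$ we have $\hat\bW_1 (x,0)\tp = (x^{2/3},0)\tp$, and the ReLU leaves this unchanged. Applying $\hat\bW_2$ gives $(x^{1/3},0)\tp$, again nonnegative. The readout then returns $(1/x)^{1/3}x^{1/3}=1$. The same computation for $(0,z)$ gives $-(1/z)^{1/3}z^{1/3}=-1$. The squared norm is
\[
\|\hat\th\|^2 = 3\bigl(x^{-2/3}+z^{-2/3}\bigr).
\]

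\textbf{Lower bound.} Let $\th$ be any interpolant.

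First, we may take the network to be balanced. By positive homogeneity, the rescaling $\bW_\ell \mapsto c_\ell \bW_\ell$ with $\prod_\ell c_\ell = 1$, $c_\ell > 0$, preserves the function. Combined with neuron-wise rescalings, this means that at a minimizer the incoming and outgoing norms of each hidden neuron are equal, and the layer norms satisfy $\|\bW_1\|_F=\|\bW_2\|_F=\|\bW_3\|_F$.

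Next, we decompose by paths. Write
\[
\net(\bx) = \sum_{\text{paths } p} \Bigl(\prod_{\ell} w_{p,\ell}\Bigr)\,\mathbb{1}[p \text{ active at } \bx]\; x_{p_0}.
\]
Since the first input has only coordinate $1$ nonzero, $\net((x,0)) = x\sum_{p\text{ from input }1,\text{ active}} \prod_\ell w_{p,\ell} = 1$. Likewise $\net((0,z)) = z\sum_{p\text{ from input }2,\text{ active}}\prod_\ell w_{p,\ell} = -1$.

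From these identities we bound the weights along each input. The path-sum from input $1$ must be at least $1/x$ in absolute value, and from input $2$ at least $1/z$. For depth-$3$ paths, Hölder/AM--GM gives
\[
\sum_p \Bigl|\prod_\ell w_{p,\ell}\Bigr| \le \prod_\ell \|\text{weights of layer } \ell \text{ used}\|.
\]
Combining this with $\prod_\ell s_\ell \le (\sum_\ell s_\ell^2/3)^{3/2}$, the squared norm of the weights attached to input-$1$ paths is at least $3x^{-2/3}$, and that of input-$2$ paths at least $3z^{-2/3}$.

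\textbf{Main obstacle: disjointness.} The hard part is showing that these two contributions add, i.e.\ that the two inputs cannot cheaply share hidden weights. Hidden units could be reused by both inputs, and the first layer's columns are disjoint but later layers are not. My first attempt is a convexity argument. Parametrize the first-layer columns $\mathbf{a}=\bW_1 e_1$ and $\mathbf{b}=\bW_1 e_2$, so the hidden representations are $x\,\relu(\mathbf{a})$ and $z\,\relu(\mathbf{b})$. Then show that the optimal downstream layers on the two representations have total cost at least that of serving them separately. This should follow because the outputs have opposite signs: if the two representations had positive overlap, the cross terms in the second and third layers would reduce $|y|$ for one of the points, forcing extra norm. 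If that argument stalls, I will instead fall back on the known characterization of minimum-norm depth-$L$ homogeneous networks with width $\geq n$. There, the minimal squared norm for realizing signed outputs on nonnegative orthogonal inputs equals $L\sum_i |y_i/\|\bx_i\||^{2/L}$, which matches $3(x^{-2/3}+z^{-2/3})$ here.

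Equality in every step (balanced layers, disjoint supports, AM--GM tight) then recovers exactly the stated diagonal matrices, confirming that the proposed interpolant attains the bound.
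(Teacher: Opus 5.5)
\textbf{There is a genuine gap, at the step you call the main obstacle.} Your interpolation check, the norm computation and the per-input Cauchy--Schwarz/AM--GM bound are all correct. But the proof stands or falls on the disjointness step, and neither of your two routes closes it.

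Your remark that opposite labels should prevent sharing points the right way. However, the ``convexity'' plan is only a heuristic. The downstream problem is non-convex, and overlap of the layer-1 representations $x\,\relu(\bW_1\mathbf{e}_1)$ and $z\,\relu(\bW_1\mathbf{e}_2)$ is not what has to be excluded; the first-layer columns are separate parameters regardless.

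The fallback ``known characterization'' is not a standard result, and it is false in the generality you state it. Take orthogonal inputs with the \emph{same} label, e.g.\ $x=z=1$ with labels $(+1,+1)$. Send both first-layer columns into one hidden neuron, followed by a single layer-2 neuron and a single readout weight. This interpolates with squared norm $3\cdot 2^{1/3}<6=L\sum_i|y_i/\|\bx_i\||^{2/L}$. So any valid argument must use the signs of the labels explicitly, and your sketch never does.

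\textbf{The missing idea is to split the layer-2 neurons by the sign of their readout weight.} Let $\mathbf{c}=\bW_3\tp$, $P=\{k: c_k>0\}$ and $Q=\{k: c_k<0\}$. Let $\mathbf{y}_1,\mathbf{y}_2$ be the post-activations at $(x,0)$. Since $\mathbf{y}_2\geq 0$ entrywise, neurons in $Q$ can only lower the output, so
\begin{align*}
1=\net((x,0))\le \sum_{k\in P} c_k(\mathbf{y}_2)_k\le x\,\|\mathbf{c}_P\|\,\|(\bW_2)_{P,:}\|_F\,\|\bW_1\mathbf{e}_1\|.
\end{align*}
Symmetrically, $1\le z\,\|\mathbf{c}_Q\|\,\|(\bW_2)_{Q,:}\|_F\,\|\bW_1\mathbf{e}_2\|$.

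The parameter groups $\{\bW_1\mathbf{e}_1,(\bW_2)_{P,:},\mathbf{c}_P\}$ and $\{\bW_1\mathbf{e}_2,(\bW_2)_{Q,:},\mathbf{c}_Q\}$ are disjoint. AM--GM on each group therefore gives $\|\th\|^2\ge 3x^{-2/3}+3z^{-2/3}$ for \emph{every} interpolant. No balancing and no Lemma~\ref{lem:lemma14} are needed. Your path language also works: the active paths through layer-2 neuron $k$ sum to $c_k(\mathbf{y}_2)_k$, which has the sign of $c_k$.

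\textbf{Comparison with the paper.} The paper does not split layers 2--3. It bounds only the first-layer columns, which are disjoint for free, using the full $\bW_2$ and $\bW_3$. It then asserts $\|\bW_1\mathbf{e}_1\|^2\ge x^{-2/3}$ at the minimizer by comparison with the one-point problem, and uses equal layer norms (Lemma~\ref{lem:lemma14}) to get $\|\th\|^2=3\|\bW_1\|_F^2$. That comparison never invokes the label signs either. In the same-sign example above, every minimizer has $\|\bW_1\mathbf{e}_1\|^2=2^{-2/3}<1$. So the sign-splitting step is what is actually needed on either route.
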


\begin{proof}[Proof of \Cref{unstable network high rank}]
    Consider the following data distribution $p (\bx, y)$ on $\cX \times \{+1, -1\}$: with probability $\frac{1}{2}$, $\bx, y = (0, \frac{1}{10}), -1$, with probability $\frac{1}{4}$, $\bx, y = (\frac{1}{2}, 0), +1$ and with probability $\frac{1}{4}$, $\bx, y = (\frac{1}{4}, 0), +1$. 

    Consider the neural architecture $\mathbb{A} = \langle 3, 2, 2 \rangle$. Consider a training set with two data points $\bx^+ = (\frac{1}{2}, 0)$ and $\bx^- = (0, \frac{1}{10})$. By Lemma \ref{lem:min-norm-sol-toy}, a minimum-norm solution is given by: 
    $\hat \bW_1 = \begin{pmatrix}
        2^{1/3} \; 0 \\
        0 \; 10^{1/3}
    \end{pmatrix}$, $\hat \bW_2 = \begin{pmatrix}
        2^{1/3} \; 0 \\
        0 \;10^{1/3}
    \end{pmatrix}$ and $\hat \bW_3 = \begin{pmatrix}
        2^{1/3} \; -10^{1/3}
    \end{pmatrix}$. 
    
    The singular value decomposition of $\hat \bW_2$ is immediate and picking the dominating right singular vector, we have that the sub-network is given by: $f_2 (\hat \theta, \bx) = 10^{1/3} x_2$. 
    
    Assume we resample the data point $\bx^+$: with probability $\frac{1}{4}$, we have $\bx_+^{(1)} = (\frac{1}{4}, 0)$. By Lemma \ref{lem:min-norm-sol-toy}, a minimum-norm interpolant is given by:
    $\hat \bW_1^{(1)} = \begin{pmatrix}
        4^{1/3} \; 0 \\
        0 \; 10^{1/3}
    \end{pmatrix}$, $\hat \bW_2^{(1)} = \begin{pmatrix}
        4^{1/3} \; 0 \\
        0 \; 10^{1/3}
    \end{pmatrix}$ and $\hat \bW_3^{(1)} = \begin{pmatrix}
        4^{1/3} \; -10^{1/3}
                                           \end{pmatrix}$.
                                          
The dominating right singular vector remains the same, so $f_2 (\hat \theta, \bx) = f_2 (\hat \theta^{(1)}, \bx) = 10^{1/3} x_2$. This implies that the sub-network is stable (specifically, it is $\frac{3}{4}$-uniformly $\epsilon$-prediction stable, for any $\epsilon > 0$). However, with probability $\frac{1}{4}$, $\bx = (\frac{1}{2}, 0)$, which leads to: 
    $\net (\hat \theta, \bx) = 1$ and $\net (\hat \theta^{(1)}, \bx) = \frac{1}{2} \cdot 4^{1/3} \cdot 4^{1/3} \cdot 4^{1/3} = 2$. Thus the network $N$ is unstable. 
\end{proof}

We conclude by proving the technical lemma establishing that our candidate solutions are minimum-norm solutions. The proof requires the following known result:

\begin{lemma}[{\citet[Lemma 14]{timor2023implicit}}]
  \label{lem:lemma14}
  Let $\hat \th = \cT(\bX, \by)$.
  Then, for every $1 \leq i < j \leq L$ we have $\|\widehat \bW_i\|_F = \|\widehat \bW_j\|_F$.
\end{lemma}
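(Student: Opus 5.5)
The plan is a rescaling argument that uses only the positive homogeneity of $\relu$ and the optimality of $\hat \th$. Fix $1 \leq i < j \leq L$, write $a := \|\widehat \bW_i\|_F$ and $b := \|\widehat \bW_j\|_F$, and suppose towards a contradiction that $a \neq b$. We will build another interpolating parameter vector with strictly smaller squared norm. This contradicts $\hat \th = \cT(\bX, \by)$ being a minimizer.

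\emph{Step 1: all layer norms are positive.} If some $\widehat \bW_\ell = 0$, then $\net(\,\cdot\,; \hat \th) \equiv 0$. That network cannot interpolate labels in $\{-1,+1\}$ (the training set is nonempty). Hence $a, b > 0$.

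\emph{Step 2: function-preserving rescaling.} For $c > 0$, define $\th_c$ by replacing $\widehat \bW_i$ with $c\, \widehat \bW_i$ and $\widehat \bW_j$ with $c^{-1} \widehat \bW_j$, leaving all other layers unchanged. We claim $\net(\bx; \th_c) = \net(\bx; \hat \th)$ for every $\bx$. The proof is by induction on the layers using the recursion \eqref{random nn}:
\begin{itemize}
\item Layers $\ell < i$ are unchanged.
\item At layer $i$ the pre-activation becomes $c\,\mathbf{h}_i$. Since $\phi_i(c\, \mathbf{h}) = c\, \phi_i(\mathbf{h})$ for $c \geq 0$ (this holds for $\relu$ and trivially for the identity), the post-activation becomes $c\, \by_i$.
\item Each layer strictly between $i$ and $j$ is linear followed by a positively homogeneous activation, so the factor $c$ propagates unchanged.
\item At layer $j$ the factor is cancelled by $c^{-1}$, and every subsequent layer is identical to before.
\end{itemize}
In particular, $\th_c$ still interpolates $(\bX, \by)$ for every $c>0$.

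\emph{Step 3: strict norm decrease.} The squared norm changes as
\[
\|\th_c\|^2 - \|\hat \th\|^2 = c^2 a^2 + c^{-2} b^2 - a^2 - b^2 .
\]
Choose $c^2 = b/a$. Then $c^2 a^2 + c^{-2} b^2 = 2ab$, and the difference equals $-(a-b)^2 < 0$ whenever $a \neq b$. So $\th_c$ is an interpolant with strictly smaller norm, contradicting the minimality of $\hat \th$. Therefore $a = b$.

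The only genuinely delicate point is Step 2, namely checking that the scalar factor commutes with every activation between layers $i$ and $j$. This uses crucially that the network has no biases (or that biases are modelled by an appended constant, which is not rescaled; in that case one would need to scale the bias coordinates too). It also uses that $c>0$, so that the nonnegative homogeneity of $\relu$ applies. Everything else is the elementary AM--GM inequality $c^2a^2 + c^{-2}b^2 \geq 2ab$, with equality forcing $a = b$ at the optimum.
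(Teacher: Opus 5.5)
Your proof is correct: rescaling $\widehat \bW_i \mapsto c\,\widehat \bW_i$ and $\widehat \bW_j \mapsto c^{-1}\widehat \bW_j$ leaves the network unchanged by positive homogeneity, and choosing $c^2 = b/a$ lowers the squared norm by $(a-b)^2$. The paper gives no proof of its own and simply cites Lemma 14 of Timor et al., whose argument is this same balancing-by-rescaling idea combined with AM--GM; your caveat is also apt, since the argument needs the bias-free parameterization and would break if biases were modeled by an appended constant $1$, an option the paper's preliminaries mention.
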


\begin{proof}[Proof of \Cref{lem:min-norm-sol-toy}]
    The proof relies on the fact that inputs $(x,0)$ and $(0,z)$ do not propagate using the same columns of $\bW_1$. We will prove that the (squared) norm of the first column $\|\bW_1 \mathbf{e}_1\|^2_2$ (where $\mathbf{e}_1=(1,0)^\top$) is lower bounded by $(\frac{1}{x})^{2/3}$. The same proof will give us that the (squared) norm of the second column $\|\bW_1 \mathbf{e}_2\|^2_2$ is lower bounded by $(\frac{1}{z})^{2/3}$. Therefore, the (squared) norm $\|\bW_1\|^2_2$ is lower bounded by $(\frac{1}{x})^{2/3} + (\frac{1}{z})^{2/3}$. Our candidate solution attains this norm. Then, we invoke Lemma 14 in \citet{timor2023implicit} (norm is equally distributed across layers at minimum-norm solutions) to conclude that there is no solution with smaller norm than our candiate, making it a minimum-norm interpolant.

    Let us now prove that $\|\bW_1 \mathbf{e}_1\|^2_2 \geq (\frac{1}{x})^{2/3}$.
    Let $\mathbf{y}_1:=\relu(\bW_1 (x,0))\in\mathbb{R}_+^2$ and $\mathbf{y}_2:=\relu(W_2 \mathbf{y}_1)\in\mathbb{R}_+^2$.
Then
\[
1 = N(x,0) = \bW_3^\top \mathbf{y}_2 \;\le\; \|\bW_3\|_2\,\|\mathbf{y}_2\|_2
\]
by Cauchy--Schwarz.
Moreover,
\[
\|\mathbf{y}_2\|_2 = \|\relu(\bW_2 \mathbf{y}_1)\|_2 \;\le\; \|\bW_2 \mathbf{y}_1\|_2
\;\le\; \|\bW_2\|_F\,\|\mathbf{y}_1\|_2.
\]
Finally,
\[
\|\mathbf{y_1}\|_2 = \|\relu(\bW_1(x,0))\|_2
\;\le\; \|\bW_1(x,0)\|_2
= x\,\|\bW_1 e_1\|_2.
\]
Combining the above inequalities yields
\[
\frac{1}{x} \;\le\; \|\bW_3\|_2\,\|\bW_2\|_F\,\|\bW_1 \mathbf{e}_1\|_2,
\].

Let
\[
a := \|\bW_1 \mathbf{e}_1\|_2, \qquad
b := \|\bW_2\|_F, \qquad
c := \|\bW_3\|_2.
\]
By the above, we have $abc \ge \frac{1}{x}$.
Applying the arithmetic--geometric mean inequality to $a^2,b^2,c^2$ yields
\[
\frac{a^2+b^2+c^2}{3}
\;\ge\;
(a^2 b^2 c^2)^{1/3}
=
(abc)^{2/3}
\;\ge\;
(\frac{1}{x})^{2/3}.
\]
Multiplying both sides by $3$ gives:
\[
\|\bW_1 \mathbf{e}_1\|_2^2 + \|\bW_2\|_F^2 + \|\bW_3\|_2^2
\;\ge\;
3\, (\frac{1}{x})^{-2/3}.
\]

Solving the minimum-norm interpolation with only one data point $(x, 0), +1$ is less restrictive than with both data points so the norm of the solution is smaller than the minimum-norm solution with both data points. Applying Lemma 14 from \citet{timor2023implicit} to the minimum-norm solution with only one data point ensures that $\|\bW_1 \mathbf{e}_1\|_2^2 \geq (\frac{1}{x})^{-2/3}$ since the norm is equally distributed across layers. This holds for the minimum-norm solution with only one data point, therefore it also holds for the the minimum-norm solution of interest (with both data points). We can conclude as exposed above that our candidate is a minimum-norm interpolant.
\end{proof}

\section{Conclusions, limitations, and future work}
In this work, we have studied sufficient conditions for algorithmic
stability in minimum-norm interpolating deep ReLU neural
networks. This study opens up several interesting avenues for future
research. One of the sufficient conditions we examined is the
existence of a stable sub-network, which we did not prove
theoretically and which remains an open question. We have tested the limits of this sufficient condition, finding that low-rank layer is a key additional assumption needed for stability to be guaranteed in this scenario, revealing strong interplay between stability and low-rank bias. In related areas,
such as the lottery ticket hypothesis, the existence of a
well-performing sub-network has been shown theoretically, which might
inspire the use of similar techniques to prove the existence of a stable sub-network. 
Finally, an interesting open question is
bridging the gap between the stability of minimum-norm interpolation
in neural networks under GF and actual optimization algorithms, such as
SGD. While there are several promising directions, a complete picture
incorporating stability analysis might require additional arguments~\citep{poggio2020theoretical,elkabetz2021continuous}.

One possible limitation of our analysis is that, even in the favourable scenario, the stability bound
involves a $B$-dependent factor which scales as $B^{2L-k+1}$ in the
best case.  For the case $k=L-1$, it might be possible to achieve a
better factor $B^L$ which would be in line with  Rademacher
complexity analysis~\citep{golowich2018size}.
%










\bibliography{learning}

\newpage
\appendix
\onecolumn

\section{Further implications of our results}

\paragraph{Stability implies generalization.}
  Let $f  : \R^2  \to [0, M]$ be a fixed known loss function.
  Then the \emph{risk} and the \emph{empirical risk} of the predictor parameterized by $\th$ are respectively defined as
  $R(\th) = \int f(\net(\bx; \, \th), y) \diff p(\bx, y)$
  and
  $\hat R(\th) = \frac1n \sum_{i=1}^n f(\net(\bx_i; \, \th), y_i)$.
  It is known that if $\hat \th$ is generated by an $\epsilon$-uniformly-stable algorithm $\cA$,
  there exist universal constants $c_1, c_2 > 0$ such that for any $\delta \in (0,1)$~\citep{feldman2018generalization,bousquet2020sharper},
  \begin{align}
    \label{eq:stab2gen}
    & \P\pr{|R(\hat \th) - \hat R(\hat \th)|
    \leq 
   c_1 \ln(n) \ln(1/\delta) \, \epsilon
    +
    c_2 \, M \sqrt{\frac{\ln(1/\delta)}{n}}
    } \\& \geq 1 - \delta~.
  \end{align}
  Hence, the gap between the population loss and empirical loss is controlled with high probability as long as the algorithm is stable.

Moreover, stability analysis provides valuable insights beyond generalization, such as controlling the variance of the algorithm, which is crucial for uncertainty quantification methods like bootstrapping ~\citep{elisseeff2005stability}.

\paragraph{Applications.}

The stability bound we presented can thus used in the following applications.
In particular, when combined with \cref{eq:stab2gen}, it implies a
high-probability bound on the generalization error:
\begin{align*}
  &|R(\hat \th) - \hat R(\hat \th)|
  \\&=
  \tilde{\mathcal{O}}\pr{
  \big(1 + B^{2L -k +1} + a \, B^{3 L-k+1} \big) \, \epsilon
  +
  \frac{1}{\sqrt{n}}
  }~.
\end{align*}
Under assumption $\epsilon = n^{-\alpha}$ for
$\alpha > 0$, the generalization error converges $0$ as $n \to \infty$
in probability.

Finally, the stability bound can also be used to give a bound on the
variance of a trained neural network, which is not normally achievable
through the uniform convergence bounds.  Controlling the variance of
trained predictors is interesting in the context of uncertainty
quantification (e.g., through ensembling, as discussed
in~\citep{elisseeff2005stability}).  In particular, Efron-Stein
inequality~\citep{boucheron2013concentration} implies that
\begin{align*}
  \Var(\net(\hat \th))
  \leq  
  \frac{C^2}{2} \,
  \big(1 + B^{2L -k +1} + a \, B^{3 L-k+1} \big)^2 \, n \, \epsilon^2~.
\end{align*}
Once again, assuming $\epsilon = n^{-\alpha}$ it turns out that the variance converges to $0$ asymptotically only if $\alpha > 1/2$, that is when sub-network is very stable with $\epsilon < 1/\sqrt{n}$.

\begin{figure}
  \centering  
        \includegraphics[width=0.5 \textwidth]{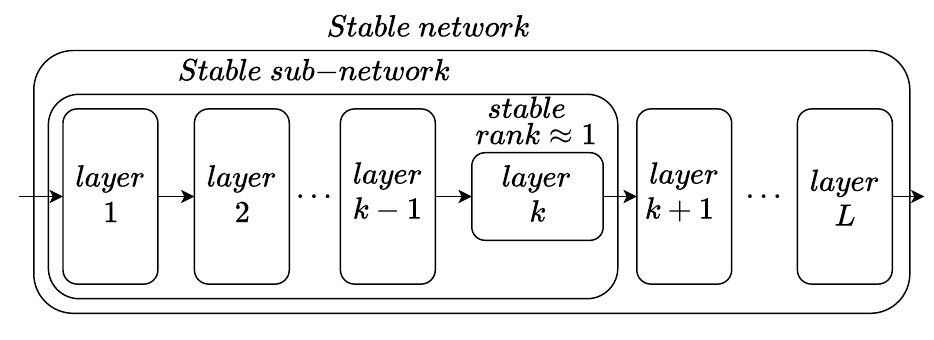}
        \caption{\textbf{A diagram of our main results.} Our main results revolve around three arguments: a) the data is expressible by a neural network with bounded weight matrix norm (Assumption \ref{asm:data}), b) the minimum-norm interpolating ReLU neural network contains at least one layer with a low stable rank matrix (see Assumption \ref{existence low stable rank})
          and c) the sub-network is stable (Hypothesis \ref{asm:stable-subnetwork}). In this scenario, we show in Theorem \ref{stable sub network with stable rank} that the full network is also stable. If b) does not hold (no low-rank layer), the full network may be unstable, even with a stable sub-network (\Cref{unstable network high rank}).
        }\label{fig:diagram}        
      \end{figure}

\section{Additional related work}
\label{sec:additional-related}

\paragraph{Generalisation and minimum-complexity interpolation.}
Empirical evidence strongly suggests that the  generalization ability of SGD algorithms remains high despite overparameterization in non-convex settings. These observations appear to be in conflict with the classical learning-theoretic \emph{complexity-fit tradeoff} viewpoint~\citep{Bousquet2004,zhang2021understanding}. 

In recent years, there has been growing interest in the hypothesis that the favorable generalization performance of optimization algorithms can be accounted for by the algorithm's \emph{implicit bias} or inherent model capacity control.
A prominent idea in this setting is that of \emph{minimum-complexity interpolation}, which states that the algorithm finds the simplest solution among those achieving  zero (or near-zero) training error.
While this phenomenon is well established in the context of linear regression (e.g.\ a pseudo-inverse solution to the least-squares problem recovers parameters with a minimum $L_2$ norm), the suggestion that it also applies to the training of a deep overparameterized ReLU neural networks is only rather recent.
As mentioned in the main text, GF on deep ReLU networks asymptotically converges to minimum-$L_2$-norm interpolants \citep{lyu2019gradient,ji2020directional,phuong2020inductive}. 
GF  is thought to be a good proxy for gradient descent algorithms because the approximation error introduced by discretization remains controlled under certain regularity assumptions~\citep{elkabetz2021continuous}.
Recent works have attempted to reconcile minimum-norm interpolation  with classical generalization theories based on the uniform convergence principle such as Rademacher complexity~\citep{bartlett2002rademacher,bartlett2017spectrally}.
Indeed, dimension-free bounds have provided some explanation as to why we do not observe overfitting in the absence of label noise \citep{ji2019polylogarithmic,telgarsky2022feature}, and observe only \emph{benign overfitting} otherwise~\citep{tsigler2020benign,koehler2021uniform,frei2023benign}.

\paragraph{Algorithmic stability}

The stability of interpolating kernel least-squares has been studied by \cite{rangamani2023interpolating} who established a connection to stability of the pseudo-inverse, which is indeed a minimum-norm interpolation, and which is in turn controlled by the smallest non-zero eigenvalue of the kernel matrix.
In this paper we look into a rather different setting of interpolation with neural networks rather than kernel machines, and provide sufficient conditions for such stability.

Recently, \cite{schliserman2022stability} analyzed the performance of gradient descent type methods on convex learning problems given  linearly separable data.
The setting of our paper can be interpreted as   noise-free labels and so in case of classification, we have separability as well, however the decision boundary can be non-linear (as we can conclude from Assumption \ref{asm:data}).

\paragraph{Neural collapse}
Recently, a phenomenon called neural collapse (NC)~\citep{papyan2020prevalence, han2021neural} has been observed where the post-activations (and weights of a final layer) of a well-trained neural network, appear to be clustered in a low-dimensional subspace.
Theoretical studies on the cause of NC mostly rely on unconstrained feature models \citep{fang2021layer,mixon2022neural} where the product of two matrices is trained under gradient flow: the two matrices train to a low-rank structure where their ranks equal the dimension of the outputs.

\paragraph{Lottery ticket hypothesis}

The existence of a small good neural network within a large deep neural network has been proposed before as a prominent hypothesis in neural network learning.
In particular the \emph{lottery ticket hypothesis}~\citep{frankle2018lottery} posits that deep neural networks contain small sub-networks which could be trained in isolation and lead to comparable performance, and that these sub-networks happen to be favored by standard initializations
The \emph{lottery ticket hypothesis} has however only been analyzed in some restricted settings \citep{frankle2020linear,malach2020proving,orseau2020logarithmic,sakamoto2022analyzing}. It is still an active area of theoretical research.
In this paper we explore a related concept, where the quality of the sub-network is captured by its stability, which establishes a link to analysis of the generalization error.

\paragraph{Benign overfitting.}
One prominent line of recent work that tries to explain the success of
deep learning is \emph{benign
  overfitting}~\citep{tsigler2020benign,koehler2021uniform}. The idea
behind benign overfitting is that even in the presence of
low-to-moderate noise, interpolating predictors are able to achieve
performance that is close to the noise rate (in other words, some
inductive bias is present in the interpolation procedure). In
particular, these works study this phenomenon through analysis of the
excess risk of interpolants (such as linear interpolants in high
dimension). The key idea behind these analyses is that
\emph{minimum-norm} interpolation is a sufficient condition for benign
overfitting. On the other hand, it is also well-known that uniform
stability is sufficient for uniform convergence and so one can show
that the excess risk has correct asymptotic behavior (convergence to
the noise rate)~\citep{shalev2010learnability}.

These theories appear to be at odds in a general sense.
However, minimal-norm interpolation can still be uniformly stable
in a restricted sense, for instance, when the problem is
well-conditioned (on the subspace)~\citep{rangamani2023interpolating}.

In this work, we argue that the existence of a uniformly stable subnetwork
is sufficient for the stability of the entire neural network. However,
such a subnetwork might not be uniformly stable everywhere, but only
on some `nice' problems (such as separable classification problems).
A deeper understanding of this is left to determining whether and when
the subnetwork is stable.

\section{Omitted proofs}
\label{sec:omitted-proofs}

\subsection{Assumption \ref{existence low stable rank}}\label{proof:lemma0}

Low-rank bias (Assumption \ref{existence low stable rank}) can actually be proven under minimum-norm interpolation, but we note that, using solely the inequalities in \cite{timor2023implicit}, an exponentially deep network is needed to have guarantees. Empirically, low-rank layers and deep neural collapse already occurs with modest depths, suggesting that the inequalities in \cite{timor2023implicit} could be improved in some relevant data regimes. 

Invoking Theorem 4 in \cite{timor2023implicit}, we have that $\hat \th$ obtained by algorihtm $\cT$ for architecture $\mathbb{A} = \langle L, d, d_0 \rangle$ verifies:
\begin{align*}
    \frac{L}{\sum_{k=1}^L \left(\mathrm{S} (\hat \bW_k)) \right)^{-1}} \leq B^{\frac{L^*}{L}}
\end{align*}
thus
\begin{align*}
    \frac{1}{L} \sum_{k=1}^L \left(\mathrm{S} (\hat \bW_k)) \right)^{-1} \geq \frac{1}{B^{\frac{L^*}{L}}}
\end{align*}
Since the quantity on the left is an average, there must exist $1 \leq k \leq L$ such that 
\begin{align}
    \left(\mathrm{S} (\hat \bW_k)) \right)^{-1} \geq \frac{1}{B^{\frac{L^*}{L}}}
\end{align}
thus
\begin{align*}
    \mathrm{S} (\hat \bW_k)) \leq B^{\frac{L^*}{L}}~.
\end{align*}
By setting $L \geq L^* \frac{\log (B)}{\log (1 + a \cdot \epsilon)}$, we ensure that:
\begin{align*}
    \mathrm{S} (\hat \bW_k)) \leq 1 + a \cdot \epsilon~.
\end{align*}
We note $\hat \bW_k = \lambda_k \bu_k \bv_k^T + \hat \bW_k^\epsilon$ its decomposition following Definition \ref{def:sub-network}.
\subsection{Proof of Lemma \ref{low stable rank preserve signal}}\label{proof:lemma1}
Throughout the proof we drop dependence on $\th$, e.g.\ $N(\bx) \equiv N(\bx; \, \th)$.

  Introduce
  \begin{align*}
    b (\bx) := \frac{1}{\epsilon} \cdot \left(\net^{k+1:L} \left(\relu \left(\lambda_k \bu_k \bv_k\tp \net^{1:k-1}(\bx) \right) \right) - \net ( \bx ) \right)
  \end{align*}
   and we can thus write:
  \begin{align*}
      \net ( \bx ) + b (\bx) \cdot \epsilon = \net^{k+1:L} \left(\relu \left(\lambda_k \bu_k \bv_k\tp \net^{1:k-1}(\bx) \right) \right)~.
  \end{align*}
  \paragraph{Showing statement 1.}First consider the case when $\bv_k\tp \net^{1:k-1}(\bx) > 0$.
  Now, using the fact that $\relu$ is positively-homogeneous,
      \begin{align*}
       \relu \left(\lambda_k \bu_k \bv_k\tp \net^{1:k-1}(\bx) \right) = \bv_k\tp \net^{1:k-1}(\bx) \cdot \relu ( \lambda_k \bu_k)
      \end{align*}
      Therefore, using positive-homogeneity once again,
      \begin{align*}
        \net ( \bx )
        &= \net^{k+1:L}\pr{\bv_k\tp \net^{1:k-1}(\bx) \cdot \relu ( \lambda_k \bu_k)} - b (\bx) \cdot \epsilon \\
        &= \bv_k\tp \net^{1:k-1}(\bx) \cdot \underbrace{
          \net^{k+1:L} \left( \relu (\lambda_k \bu_k) \right)
          }_{C^+} - \, b (\bx) \cdot \epsilon
      \end{align*}
      where we note that $C^+$ can take a different sign since $C^+ = \bW_L\tp \net^{k+1:L-1} \left(  \relu (\lambda_k \bu_k) \right)$.

      %
      \paragraph{Showing statement 2.}
      Now, considering an alternative case $\bv_k\tp \net^{1:k-1}(\bx) \leq 0$, positive-homogeneity once again gives
      \begin{align*}
        \relu \left(\lambda_k \bu_k \bv_k\tp \net^{1:k-1}(\bx) \right) = - \bv_k\tp \net^{1:k-1}(\bx) \cdot \relu ( -\lambda_k \bu_k)
      \end{align*}
      and so
      \begin{align*}
        \net ( \bx )
        &=
          \net^{k+1:L}\pr{- \bv_k\tp \net^{1:k-1}(\bx) \cdot \relu ( - \lambda_k \bu_k)} - b (\bx) \cdot \epsilon\\
        &=
          \bv_k\tp \net^{1:k-1}(\bx) \cdot \underbrace{\pr{-\net^{k+1:L} \left( \relu (- \lambda_k \bu_k) \right)}}_{C^-} - b (\bx) \cdot \epsilon~.
      \end{align*}
      \paragraph{Showing statement 3 and 4.}
      It suffices to prove that for any input $\bx \in \mathbb{R}^{d_0}$,
  \begin{align*}
      \left| \net ( \bx ) -\net^{k+1:L} \left(\relu \left(\lambda_k \bu_k \bv_k\tp \net^{1:k-1}(\bx) \right) \right) \right| \leq a \, \epsilon \, B^L~.
  \end{align*}
  By $1$-Lipschitzness of $\relu$, we have:
  \begin{align*}
    &\| \relu \left(\bW_k \; \net^{1:k-1}(\bx) \right) - \relu \left(\lambda_k \bu_k \bv_k\tp \net^{1:k-1}(\bx) \right) \|\\
    &\quad\leq \| \bW_k^{\epsilon} \; \net^{1:k-1}(\bx) \|\\
    &\quad\leq \|\bW_k^{\epsilon}\|_F \, \| \net^{1:k-1}(\bx) \|\\
    &\quad\leq  a \, \epsilon \, \|\bW_k\|_F \, \|\net^{1:k-1}(\bx)\|~.
  \end{align*}
where the last inequality comes by the following observation:
\begin{align*}
  \mathrm{S}(\bW_k) \leq 1 + a \, \epsilon
  \quad \Longrightarrow \quad
  \|\bW_k^{\epsilon}\|_F \leq  a \, \epsilon \, \|\bW_k\|_F~.
\end{align*}
  At the same time, note that by $1$-Lipschitzness of $\relu$, and Cauchy-Schwartz inequality we have that for any $\bz, \bz' \in \R^{d_k}$,
  \begin{align*}
    |\net^{k+1:L}(\bz) - \net^{k+1:L}(\bz')| \leq \|\bz - \bz'\| \, \prod_{j=k+1}^L \|\bW_j\|_F~.
  \end{align*}
  Combining the above we have
  \begin{align*}
    &\left| \net ( \bx ) -\net^{k+1:L} \left(\relu \left(\lambda_k \bu_k \bv_k\tp \net^{1:k-1}(\bx) \right) \right) \right|\\
    &\quad=
      \left| \net^{k+1:L} \left( \relu \left(\bW_k \; \net^{1:k-1}(\bx) \right) \right) -\net^{k+1:L} \left(\relu \left(\lambda_k \bu_k \bv_k\tp \net^{1:k-1}(\bx) \right) \right) \right|\\
    &\quad\leq
      \pr{\prod_{j=k}^L \|\bW_j\|_F} \cdot
      a \cdot \epsilon \, \|\net^{1:k-1}(\bx)\|\\
    &\quad\leq
      \pr{\prod_{j=1}^L \|\bW_j\|_F} \cdot
      a \cdot \epsilon
  \end{align*}
  where the last inequality comes by Cauchy-Schwartz inequality, realizing that $\relu(|x|)=|x|$, and the fact that $\|\bx\| \leq 1$.
  \QED
\subsection{Proof of Lemma \ref{margin stable rank}}\label{proof:lemma2}
  First, notice that $\net (\cdot, \hat{\th})$ is a minimum-norm interpolant and so by Lemma \ref{lem:lemma14},
  the weight matrices $\widehat \bW_\ell$ have the same norm and thus verify:
  \begin{align*}
      \| \widehat \bW_\ell\|_F^2 = \frac{1}{L} \sum_{\ell = 1}^L \| \widehat \bW_\ell\|_F^2  \leq \frac{1}{L} \sum_{\ell = 1}^L \|\mathbf{W}_\ell^*\|_F^2 \leq \frac{1}{L} \cdot L \cdot B^2
  \end{align*}
  which gives us $\| \widehat \bW_\ell\|_F \leq B$.
  \paragraph{Proof of a lower bound.}
  Using the fact that $\net (\bx_i; \, \hat \th) = y_i$ and assumption that $|y_i| \geq 1$ for each $i$, we have:
\begin{align*}
  1 \leq |y_i| &= |\net (\bx_i; \, \hat \th) |\\
      &=
        \abs{
        \net^{k+1:L} \left( \relu \left(\hat \lambda_k \hat \bu_k \hat \bv_k\tp \, \net^{1:k-1}(\bx) + \widehat \bW_k^{\epsilon} \, \net^{1:k-1}(\bx) \right) \right)
        }\\
        &\stackrel{(a)}{\leq}
          \abs{
          \relu \left(\hat \lambda_k \hat \bu_k \hat \bv_k\tp \, \net^{1:k-1}(\bx) + \widehat \bW_k^{\epsilon} \; \net^{1:k-1}(\bx) \right)
          }
          \prod_{\ell = k+1}^{L} \| \widehat \bW_{\ell}\|_2\\
      &\stackrel{(b)}{\leq}
      \left(\left| {\hat \bv_k}\tp \net^{1:k-1} (\bx_i; \, \hat \th) \right| \| \hat \bu_k \| |\hat \lambda_k| + \left\|\widehat \bW_k^\epsilon \, \net^{1:k-1} (\bx_i; \, \hat \th) \right\| \right) \, \prod_{\ell = k+1}^{L} \| \widehat \bW_{\ell}\|_F
\end{align*}
where steps $(a),(b)$ comes by the Cauchy-Schwartz inequality and the fact that $\relu(|x|) = |x|$.

Thus,
    \begin{align}\label{inequality margin stable rank}
      \frac{1}{B^{L-k}} \leq \left| {\hat \bv_k}\tp \net^{1:k-1} (\bx_i; \, \hat \th) \right| \| \hat \bu_k\| | \hat \lambda_k | + \left \| \widehat \bW_k^\epsilon \, \net^{1:k-1} (\bx_i; \, \hat \th) \right\|
      \end{align}
      Furthermore by Lemma \ref{low stable rank preserve signal},
    \begin{align*}
      \left\|\widehat \bW_k^\epsilon \, \net^{1:k-1} (\bx_i; \, \hat \th) \right\|
      \leq
      a \, \epsilon \prod_{j=1}^k \|\bW_k^{\epsilon}\|_2
      \leq a \, \epsilon \, B^{k}~.
      \end{align*}
      By assumption of the Lemma we have $n \geq \left(2 \, M \, a \, B^L\right)^{\frac{1}{\alpha}}$, and therefore $\epsilon = \frac{M}{n^\alpha} \leq \frac{1}{2 \cdot a \cdot B^L}$,      
    which gives us that:
    \begin{equation*}
      \left\| \widehat \bW_k^\epsilon \, \net^{1:k-1} (\bx_i; \, \hat \th) \right\| \leq \frac{1}{2 \, B^{L-k}}~.
    \end{equation*}
    Plugged into \cref{inequality margin stable rank},  and using the fact that $ | \hat \lambda_k | \leq \|\widehat \bW_k\|_2 \leq  \| \widehat \bW_k\|_F \leq B$
and using that
    $\| \hat \bu_k \| =1$, we have:
    \begin{equation*}
      \left| {\hat \bv_k}\tp \net^{1:k-1} (\bx_i; \, \hat \th) \right| \geq \frac{1}{2 \, B^{L-k+1}}~.
    \end{equation*}
    \paragraph{Proof of an upper bound.}
    Similarly, using the Cauchy-Schwartz inequality,
    \begin{equation*}
      |\fsub_k(\bx; \, \hat \th)| = \left| {\hat \bv_k}\tp \net^{1:k-1} (\bx_i; \, \hat \th) \right| \leq \| \hat \bv_k \| \|\bx_i\|  \, \prod_{\ell = 1}^{k-1} \| \widehat \bW_{\ell}\|_F \leq B^{k-1}~.
    \end{equation*}
    \QED
\section{Experiments}\label{app:experiments}
\paragraph{Code} The code for our experiment is available at \url{https://anonymous.4open.science/r/stability_min_norm-E5B7}

\paragraph{Dataset}
We used binarized MNIST \citep{lecun1998gradient} and Fashion-MNIST \citep{xiao2017fashion} 
datasets where we assigned target value $-1$ for the first $5$ classes and $1$ for all other classes. The images were flattened to $1$-dimensional vectors.

\paragraph{Architecture}
For the experiment, we used a bias-free FCN with a width of $100$ and a depth of $8$. The model had a scalar output, in which MSE loss was used to classify between labels with output values $-1$ (first $5$ classes) and $1$ (latter $5$ classes). The weight matrix $W_k$ was initialized with Gaussian distribution with standard deviation $1/\sqrt{N_{k-1}}$.

\paragraph{Training} 
All models were trained with Adam \citep{KingmaB15} at a learning rate of $0.001$ until it reached over $99\%$ training accuracy. We used a weight decay of $0.005$ to obtain a minimum norm solution. At epochs $60$ and $120$, we divided the learning rate and weight decay by $5$.

\paragraph{Stability} To calculate the stability, we prepared $5$ models with identical architectures and initialization trained on $10,000$ mutually exclusive data points from MNIST. For all models, we perform SVD on the weight matrix of the $k^{th}$ layer ($W_k$) to obtain the largest right eigenvector $v_k$ and the subnetwork $\fsub_k$ (\cref{eq:sub-network}). Because the output of subnetworks can vary up to a scale, we normalize the subnetwork such that 
$\sum_{\bx \in \mathrm{TEST}} \fsub_k(\bx; \, \theta^{(i)})^2 = 1$.

The stability was measured by the absolute difference of the $\fsub_k$ on the test set.\footnote{The empirical $v_k$ may contain a global sign difference; we took the smallest stability obtained from $-v_k$ and $v_k$.}
\begin{equation*}
\mathrm{Stability} \approx \frac{1}{4}\sum_{i=2}^{5}\sum_{\bx \in \mathrm{TEST}} |\fsub_k(\bx; \, \theta^{(1)}) - \fsub_k(\bx; \, \theta^{(i)})|.
\end{equation*}
Likewise, the sign stability was assessed by the occurrences of identical labels.
\begin{equation*}
\mathrm{Sign~stability} := \frac{1}{4}\sum_{i=2}^{5}\sum_{\bx \in \mathrm{TEST}} |\mathrm{sign}(\fsub_k(\bx; \, \theta^{(1)}))- \mathrm{sign}(\fsub_k(\bx; \, \theta^{(i)}))|.
\end{equation*}

\paragraph{Computing resource}
Our experiments require a few minutes to $2$ hours of training on a GPU (RTX 3070 8GB) depending on the number of data points. All experiments require less than $3$ GB of memory. Experiments with a few data points were trained on CPU cluster which contains the following CPUs: Intel(R) Core(TM) i5-7500, i7-9700K, i7-8700; and Intel(R) Xeon(R) Silver 4214R, Gold 5220R, Silver 4310, Gold 6226R, E5-2650 v2, E5-2660 v3, E5-2640 v4, Gold 5120, Gold 6132.

\subsection{Experimental results}
\label{sec:add_plots}

In \Cref{fig:stability,fig:stability_by_n}, we present empirical evidence in support of our conjecture, and validating our main result: trained FCNs contain sub-networks (followed by a layer with low stable rank) whose stability is on par with that of the full network. Observe that the prediction stability of the
entire network is roughly of the same order (with respect to $n$) as
that of the sub-network stability: this is captured by similar slopes
in the log-scale, which corresponds to $- \alpha$ in
$\epsilon= n^{-\alpha}$ assumption.

Even though we empirically validate that our assumptions hold for
bias-free FCNs trained on several datasets, whether the condition holds
in larger architectures trained on more complex datasets requires more
empirical exploration. In the future, we aim to extend our results to
more complex scenarios and empirically explore the limit in which deep
neural networks satisfy our assumptions.

\begin{figure}[h]
    \centering
        \includegraphics[width=0.5 \textwidth]{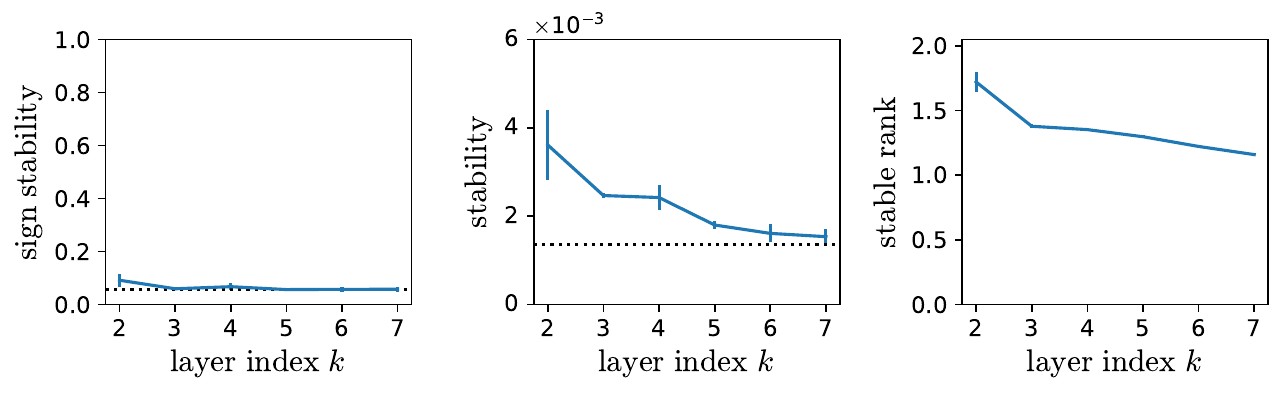}
    \caption{\textbf{Stability of sub-networks and stable rank of the layers.}
      We trained an $8$-layer FCN on a uniformly drawn $10^4$ MNIST sample by minimizing a mean square error (MSE) loss to near zero, classifying the first $5$ classes as $-1$ and others as $1$.
      We performed multiple trials, where
      each trial is with identical initialization and a different portion of the training set is replaced for each trial.
      The error bars are $1$ standard deviation of the trial.
      Using the models, we measured the sign stability \textbf{(left)}, i.e. $|\mathrm{sign}(\fsub_k(\bx; \, \hat \th)) -\mathrm{sign}(\fsub_k(\bx; \, \hat \th^{(i)}))|$, stability \textbf{(middle)}, and the stable rank of weight matrix \textbf{(right)} for each sub-network $\fsub_k$ for $ 2 \leq k \leq 7$. The horizontal dotted lines are the (sign) stability of the full network.  For the details of the experiment, link to our code, and additional experiments on
      Fashion-MNIST, see \cref{app:experiments}.
    }\label{fig:stability}
\end{figure}
%
\begin{figure}[h]
    \centering
        \includegraphics[width=0.5 \textwidth]{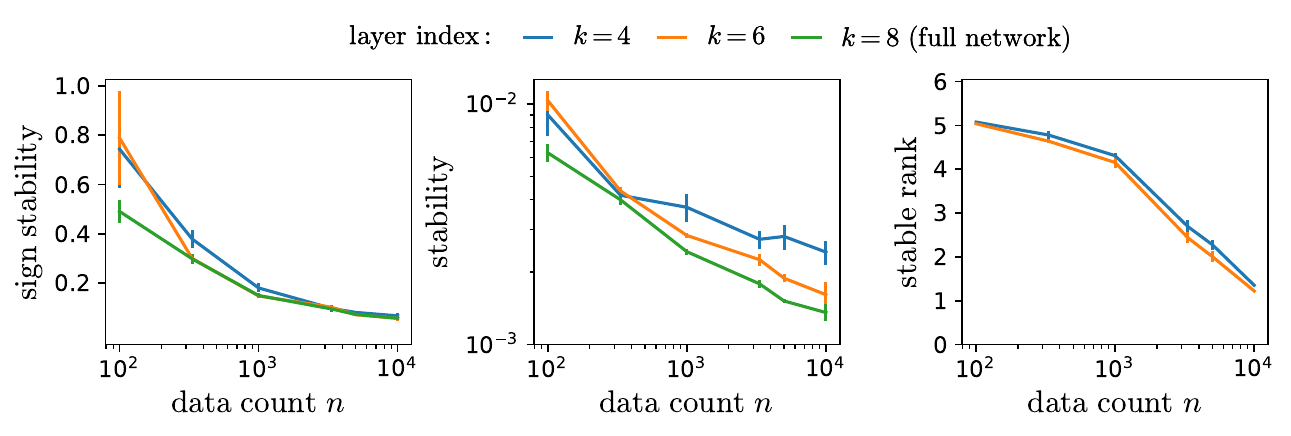}
        \caption{\textbf{Stability as a function of number of data points} We followed the same setting as in \cref{fig:stability} while varying training data set sizes. Both the sign stability \textbf{(left)} and stability \textbf{(middle)} of sub-networks (blue and orange) decay at a rate similar to that of the full network (green). The stable rank of weight matrices \textbf{(right)} also decreases as a function of $n$, suggesting that \Cref{existence low stable rank} holds in the large $n$ limit.
          Observe that the slopes for the sub-networks and the full network are similar which validates that the respective stabilities have the same dependency in $n$ (\cref{stable sub network with stable rank}).} \label{fig:stability_by_n}
\end{figure}

In \cref{fig:fmnist_stability,fig:fmnist_stability_by_n}, we repeat the experiments for \cref{fig:stability,fig:stability_by_n} for Fashion-MNIST and obtain equivalent results. In \cref{fig:compression}, we plot the performance of the sub-networks in \cref{fig:fmnist_stability}.

\begin{figure}
    \centering
        \includegraphics[width=0.9 \textwidth]{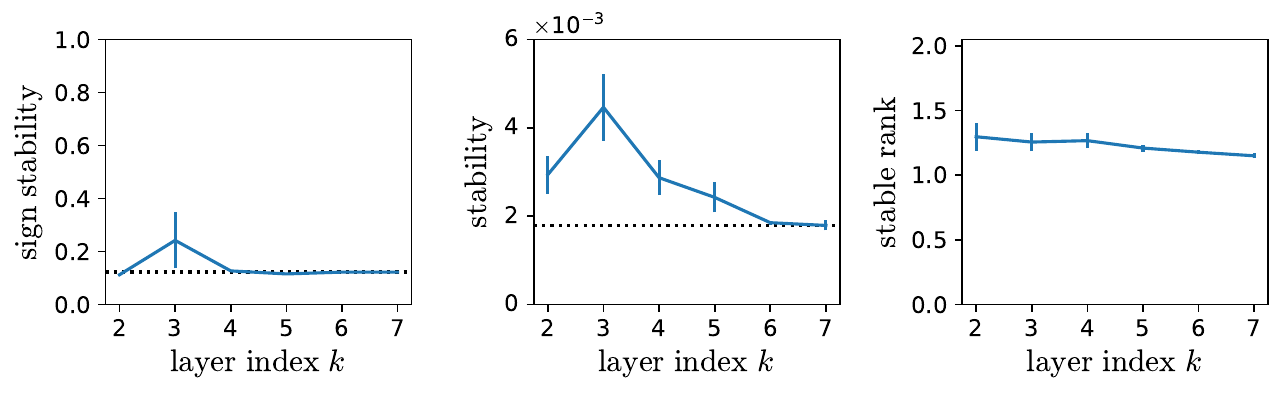}
    \caption{\textbf{Stability of subnetworks and stable rank of the layers (Fashion-MNIST).} We repeat the experiment of \cref{fig:stability}, but on Fashion-MNIST dataset. In agreement with the experiment for MNIST, FCN contains stable subnetwork and low-rank layers. }\label{fig:fmnist_stability}
\end{figure}

\begin{figure}
    \centering
        \includegraphics[width=0.9 \textwidth]{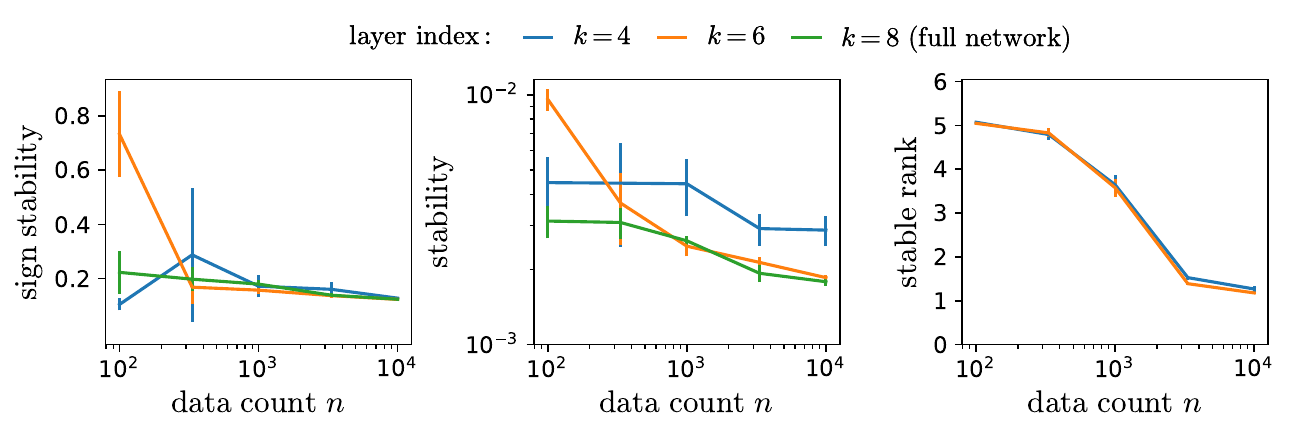}
    \caption{\textbf{Stability as a function of data points (Fashion-MNIST)} We repeat the experiment of \cref{fig:stability_by_n}, but on Fashion-MNIST dataset. As $n$ increases, the stability of the subnetwork $\fsub_6$ (orange) is similar to the stability of the whole network (green) and the stable rank of $W_6$ also converges to $1$.}\label{fig:fmnist_stability_by_n}
\end{figure}

\begin{figure}[htp]
    \centering
        \includegraphics[width=1.0 \textwidth]{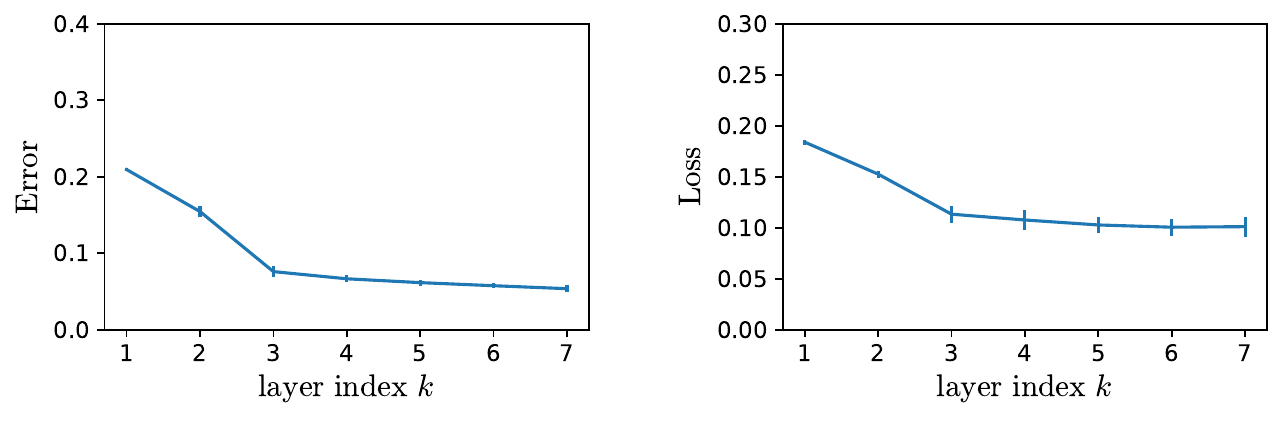}
    \caption{\textbf{Layer compression.} Using the models trained for \cref{fig:stability}, we send the output of the $k^{\mathrm{th}}$ layer to an auxiliary linear layer, which is trained using Adam over $200$ epochs. We measure the test error \textbf{(left)} and the test loss \textbf{(right)}, which resembles the stability in \cref{fig:stability}. }\label{fig:compression}
\end{figure}

\end{document}